\documentclass{article}

\PassOptionsToPackage{numbers, compress}{natbib}
\usepackage[preprint]{neurips_2026}

\usepackage[utf8]{inputenc} 
\usepackage[T1]{fontenc}    
\usepackage{hyperref}       
\usepackage{url}            
\usepackage{booktabs}       
\usepackage{amsfonts}       
\usepackage{nicefrac}       
\usepackage{microtype}      
\usepackage{xcolor}         

\usepackage{graphicx}
\usepackage{comment}
\usepackage{algorithm}
\usepackage{algorithmic}
\usepackage{amsmath}
\usepackage{amsthm}
\usepackage{amssymb}
\newtheorem{definition}{Definition}

\newtheorem{theorem}{Theorem}
\usepackage{booktabs}
\usepackage{multirow}
\usepackage{wrapfig}  

\definecolor{softgrayblue}{RGB}{60, 130, 130}
\hypersetup{
    colorlinks=true,
    linkcolor=softgrayblue,
    citecolor=softgrayblue,
    urlcolor=black
}

\title{Pooling and Semantic Shift: The Fundamental Challenges in Long Text Embedding and Retrieval}

\author{
Hang Gao$^{1}$ \qquad Wujiang Xu$^{1}$ \qquad Kai Mei$^{1}$ \qquad Dimitris N. Metaxas$^{1}$\\
$^{1}$Rutgers University \\
\texttt{\{h.gao, wujiang.xu, kai.mei\}@rutgers.edu} \quad
\texttt{dnm@cs.rutgers.edu}
}

\begin{document}

\maketitle

\begin{abstract}
  Transformer-based embedding models frequently exhibit geometric pathologies, such as anisotropy and length-induced representation collapse, which can degrade downstream retrieval performance. While prior work often attributes these issues directly to text length or attention mechanisms, we argue that the fundamental drivers are instead the inherent \emph{pooling operations} coupled with internal \emph{semantic shift}. In this paper, we establish a unified theoretical framework proving that contextual pooling intrinsically causes embedding collapse. Specifically, we mathematically prove that pooling semantically diverse sentences inevitably leads to micro-level semantic dilution, and strictly reduces the Mean Pairwise Distance of the vector space, guaranteeing macro-level spatial concentration. Grounded in these geometric insights, we formally define \emph{semantic shift} to capture the natural semantic evolution and dispersion within a text. Through carefully controlled experiments across diverse models and corpora, we disentangle text length from semantic content. We demonstrate that semantic shift is the primary predictor of severe embedding concentration. Crucially, our retrieval evaluations reveal that anisotropy is fundamentally harmful only when induced by strong semantic shifts, reconciling conflicting observations in prior literature and offering a principled explanation for the long-context challenges faced by modern embedding models.
\end{abstract}

\section{Introduction}
Text embeddings have become indispensable for retrieval, question answering, clustering, and a wide range of semantic processing tasks. Classic distributional methods (e.g., Word2Vec \citep{mikolov2013distributed}, GloVe \citep{pennington2014glove}, and fastText \citep{bojanowski2017enriching}) have been largely superseded by Transformer-based Pretrained Language Models (PLM) such as BERT \citep{devlin2019bert} and its variants \citep{liu2019roberta}, as well as GPT-style models \citep{radford2019gpt2}, which produce context-sensitive representations that substantially improve semantic matching.

Despite their empirical success, a growing body of work has revealed that embedding spaces exhibit non-trivial geometric \emph{pathologies}. A widely discussed phenomenon is \emph{anisotropy}, where embeddings concentrate into a narrow cone rather than being uniformly distributed \citep{gao2018representation,ethayarajh2019contextual}. Intuitively, this aggregation of vectors leads to a decrease in the discriminative power of the vectors, which in turn leads to a decrease in the quality of downstream performance. A series of post-processing and normalization techniques have been proposed to mitigate such issues, e.g., removing dominant directions \citep{mu2018allbutthetop,arora2017sif,raunak2019pca},
whitening \citep{su2021whitening,huang2021whiteningbert}, or flow-based transformations \citep{li2020sentence}. However, some studies have shown that these measures do not significantly improve query performance. Therefore, vector concentration/anisotropy remains an interesting open problem. Recent work has identified length-induced embedding collapse, where embeddings of longer texts exhibit reduced variance and become increasingly difficult to distinguish \citep{zhou2025lengthcollapse}. They attribute this effect to the attention mechanism: as input length grows, the attention matrix exhibits a stronger low-pass filtering behavior, accelerating the suppression of high-frequency semantic variations and consequently driving long-text embeddings toward increasingly similar representations.
These observations are important, but incomplete. Although length leads to embedding vector concentration, this is not the fundamental reason. More importantly, vector concentration does not necessarily lead to a decline in downstream query quality. First, let us look at a simple experiment to see why vector concentration does not necessarily lead to a degradation in downstream performance. When we embed the same corpus using different models, the resulting Mean Pairwise Distance (MPD) \citep{ethayarajh2019contextual, ait2023anisotropy}, a common measure of concentration/anisotropy, can vary dramatically.

\begin{wrapfigure}{r}{0.5\textwidth} 
    \centering
    \vspace{-10pt} 
    \includegraphics[width=0.5\textwidth]{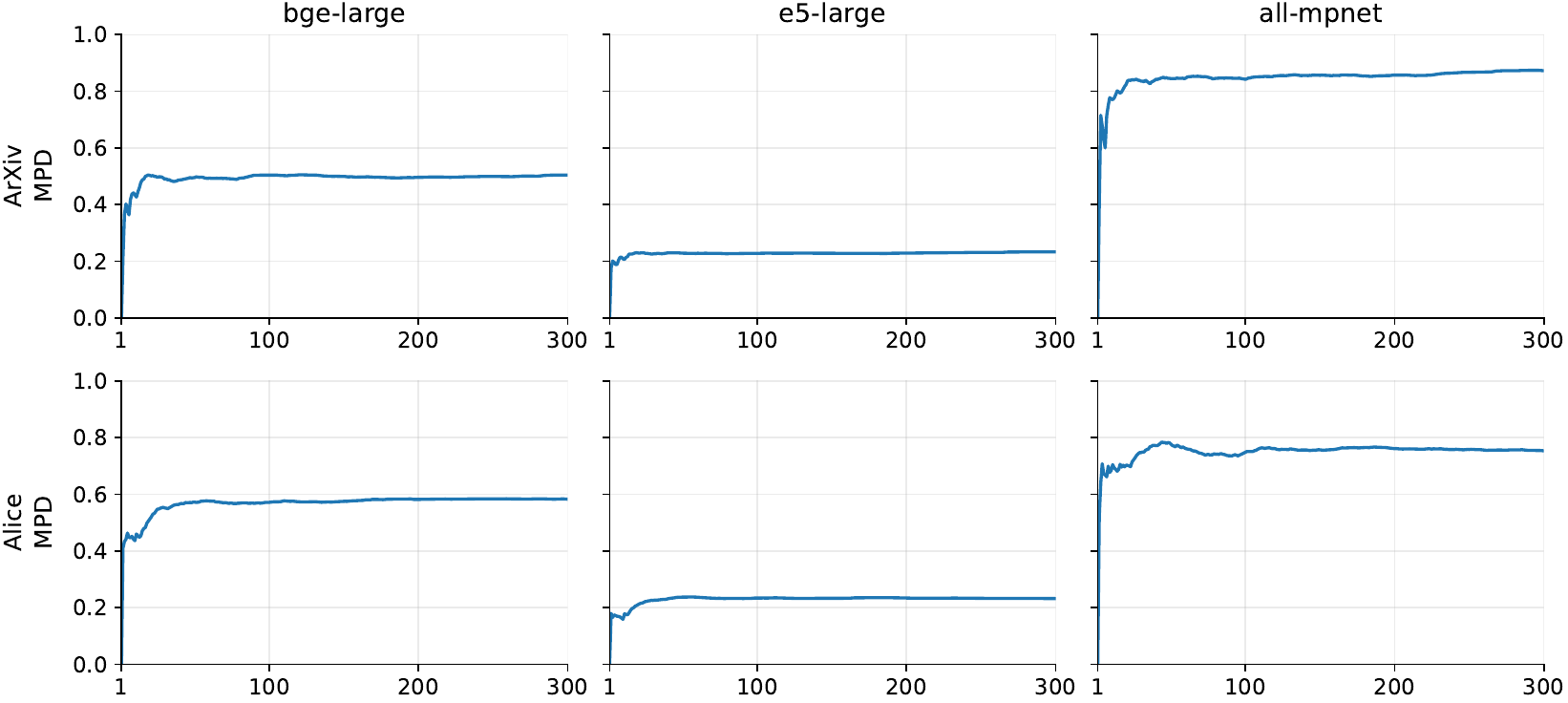} 
    \vspace{-10pt}    \caption{Mean Pairwise Distance (MPD) curves for three embedding models across two corpora. The $x$-axis is the number of sentences; the $y$-axis is MPD.}
    \label{fig:mpd_across_models}
    \vspace{-10pt} 
\end{wrapfigure}

Figure~\ref{fig:mpd_across_models} illustrates how the MPD of sentence embeddings evolves on two corpora, ArXiv~\citep{arxiv_abstracts_hf} and Alice's Adventures in Wonderland~\citep{project_gutenberg,pg_alice}, under several widely used embedding models bge-large~\citep{xiao2024bge}, e5-large~\citep{wang2022e5}, and all-mpnet~\citep{song2020mpnet}. In this experiment, texts are segmented into sentences, all sentences are embedded, and the MPD is computed incrementally over the first 1, 2, \dots, $n$ sentences. As shown in Figure~\ref{fig:mpd_across_models}, the MPD converges to a stable value once $n$ becomes sufficiently large. However, the converged MPD differs drastically across models: bge-large stabilizes around $0.6$, e5-large around $0.2$, and all-mpnet around $0.8$.

Despite these large discrepancies in embedding concentration, these models exhibit broadly comparable performance in practical downstream tasks. This observation makes it difficult to attribute degraded performance solely to anisotropy (i.e., embedding concentration), and it cannot be explained by length-induced embedding collapse either, since all models embed texts of identical length. 

In this paper, we argue that the fundamental factors are \textbf{pooling} and \textbf{semantic shift}. Pooling brings semantic smoothing and dilution, whereas semantic shift in long texts exacerbates smoothing and dilution.
To substantiate this claim, we provide a unified theoretical and empirical framework that identifies pooling and semantic shift as the fundamental driver of embedding pathologies. In summary, our main contributions are:

\textbf{Theoretical Foundation of Semantic Shift and Collapse:} We mathematically prove that pooling mechanisms intrinsically drive representation collapse. Specifically, Theorem~\ref{thm:semantic-dilution} establishes that semantic diversity forces the pooled embedding to shift away from constituent sentences (\emph{semantic dilution}), while Theorem~\ref{thm:pooling-collapse} proves that contextual aggregation strictly reduces the Mean Pairwise Distance (MPD) of the vector space. Grounded in these geometric proofs, we formally define \emph{semantic shift} to jointly capture local semantic evolution and global dispersion within a text.

\textbf{Empirical Disentanglement and Validation:} Through controlled experiments across diverse models and corpora, we disentangle the effects of length and semantic content. We demonstrate that semantic shift—rather than text length—is the primary predictor of severe embedding concentration. Crucially, our retrieval evaluations reveal that anisotropy is fundamentally harmful only when induced by strong semantic shifts, reconciling conflicting observations in prior literature regarding embedding geometry and downstream performance.

\section{Semantic Smoothing in Transformer-Based Embedding Models}
\label{sec:semantic-dilution}

Transformer-based embedding models construct fixed-length text representations by aggregating contextualized token embeddings via mechanisms such as mean pooling in Sentence-BERT~\citep{reimers2019sbert} or \texttt{[CLS]} token attention pooling in BERT~\citep{devlin2019bert}. As we detail in Appendix~\ref{app:pooling}, under a theoretical approximation, the core contextual aggregation in these operations effectively computes a convex combination of its constituent token representations and, by natural extension, in-context span representations:
\(
    c = \sum_{i=1}^k w_i\, e_i, \quad \text{with} \quad \sum_{i=1}^k w_i = 1,
\)
where $e_i$ is the in-context span representation of the $i$-th sentence, $w_i \ge 0$ is its corresponding aggregate weight, and $c$ represents the pooled contextual information. Because this convex combination mechanism fundamentally drives the geometric dynamics of the final embedding, we can conceptually analyze the text embedding behavior by treating a multi-sentence document's contextual representation as being pooled directly from a set of constituent span vectors. This formulation directly sets the stage for our analysis of semantic dilution.

\subsection{Semantic Diversity Forces Semantic Dilution}

Pooling imposes strong geometric constraints: the resulting text embedding must lie in the convex hull of its constituent sentence embeddings. When these sentences are semantically homogeneous, pooling preserves their direction. When they are diverse, pooling forces them into a compromised direction, diluting each sentence's individual meaning.

To make this precise, suppose that a text contains $k$ unit-normalized sentence embeddings $v_1,\dots,v_k \in \mathbb{R}^d$ ($\|v_i\|=1$), and let the pooled embedding and its normalized version be $\mu = \frac{1}{k}\sum_{i=1}^k v_i$ and $\hat\mu = \mu/\|\mu\|$, respectively. We quantify sentence-level semantic diversity by the mean pairwise cosine distance $C_{\mathrm{pair}}$: 

\begin{equation}
C_{\mathrm{pair}} = \frac{1}{k(k-1)} \sum_{1 \le i \neq j \le k} \left( 1 - v_i^\top v_j \right),
\label{eq:Cpair}
\end{equation}

we measure how "unlike" the pooled embedding is relative to the original sentences by $C_{\mathrm{mean}}$:
\begin{equation}
C_{\mathrm{mean}} = \frac{1}{k} \sum_{i=1}^k \left(1 - v_i^\top \hat\mu\right).
\end{equation}

\begin{theorem}[Semantic Dilution]
\label{thm:semantic-dilution}
For any set of unit-normalized sentence embeddings, the discrepancy between the pooled text embedding $\hat\mu$ and the constituent sentences satisfies:
\begin{equation}
    C_{\mathrm{mean}} = 1 - \sqrt{\,1 - \frac{k-1}{k} C_{\mathrm{pair}}\,}.
\end{equation}
Consequently, $C_{\mathrm{mean}}$ is a strictly increasing function of $C_{\mathrm{pair}}$ for all $k \ge 2$.
\end{theorem}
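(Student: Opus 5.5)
The plan is to express both $C_{\mathrm{mean}}$ and $C_{\mathrm{pair}}$ through the single quantity $\|\mu\|^2$ and then eliminate it. First I will expand the squared norm of the mean using $\|v_i\|=1$:
\[
\|\mu\|^2 = \frac{1}{k^2}\sum_{i,j} v_i^\top v_j = \frac{1}{k^2}\Bigl(k + \sum_{i\neq j} v_i^\top v_j\Bigr).
\]
From the definition~\eqref{eq:Cpair}, $\sum_{i\neq j} v_i^\top v_j = k(k-1)(1-C_{\mathrm{pair}})$. Substituting gives
\[
\|\mu\|^2 = \frac{1}{k} + \frac{k-1}{k}(1-C_{\mathrm{pair}}) = 1 - \frac{k-1}{k}C_{\mathrm{pair}}.
\]

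Second, I will compute $C_{\mathrm{mean}}$. By linearity,
\[
\frac{1}{k}\sum_i v_i^\top\hat\mu = \mu^\top\hat\mu = \|\mu\|.
\]
Hence $C_{\mathrm{mean}} = 1-\|\mu\|$, and combining with the first step gives the closed form $C_{\mathrm{mean}} = 1-\sqrt{1-\frac{k-1}{k}C_{\mathrm{pair}}}$.

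The one genuine subtlety is that $\hat\mu$ is undefined when $\mu=0$, which happens exactly when $C_{\mathrm{pair}} = k/(k-1)$. I will state that the identity holds whenever $\mu\neq 0$, i.e.\ on $0\le C_{\mathrm{pair}}<k/(k-1)$. At the endpoint one can adopt the convention $C_{\mathrm{mean}}=1$ by continuity. I will also note that the radicand is nonnegative automatically, since it equals $\|\mu\|^2$.

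Finally, for monotonicity, the map $x\mapsto 1-\sqrt{1-ax}$ with $a=(k-1)/k>0$ (valid for $k\ge2$) has derivative $\frac{a}{2\sqrt{1-ax}}>0$ on the admissible range. Equivalently, it is a composition of the strictly decreasing map $x\mapsto 1-ax$ with the strictly decreasing map $t\mapsto 1-\sqrt t$, so it is strictly increasing in $C_{\mathrm{pair}}$, including at the endpoint. No step is a real obstacle; the only care needed is the degenerate case $\mu=0$.
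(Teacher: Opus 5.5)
Your proposal is correct and follows the paper's proof essentially step for step: reduce $C_{\mathrm{mean}}$ to $1-\|\mu\|$, expand $\|\mu\|^2$ through the pairwise sum to obtain $1-\frac{k-1}{k}C_{\mathrm{pair}}$, and conclude strict monotonicity from the positive derivative. Your explicit handling of the degenerate case $\mu=0$ (equivalently $C_{\mathrm{pair}}=k/(k-1)$) is a small but real improvement, since the paper divides by $\|\mu\|$ without comment.
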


\begin{proof}
We first compute $C_{\mathrm{mean}}$ in terms of $\mu$. Since $\sum_{i=1}^k v_i = k\mu$, we have:
\begin{equation}
    C_{\mathrm{mean}} = \frac{1}{k} \sum_{i=1}^k \left(1 - v_i^\top \hat\mu\right) = 1 - \frac{1}{k\|\mu\|}\sum_{i=1}^k v_i^\top \mu = 1 - \|\mu\|.
    \label{eq:Cmean_mu}
\end{equation}
Next, expanding the squared norm of $\mu$ and substituting the sum $\sum_{i \neq j} v_i^\top v_j = k(k-1)(1 - C_{\mathrm{pair}})$ derived from Equation~\ref{eq:Cpair}, we obtain:
\begin{align}
    \|\mu\|^2 &= \| \frac{1}{k}\sum_{i=1}^k v_i \|^2 = \frac{1}{k^2}\left( k + \sum_{1\le i \neq j\le k} v_i^\top v_j \right) \nonumber \\
              &= \frac{1}{k^2}\Big( k + k(k-1)(1 - C_{\mathrm{pair}}) \Big) = 1 - \frac{k-1}{k} C_{\mathrm{pair}}.
    \label{eq:mu_norm_pair}
\end{align}

Combining Equations~\ref{eq:Cmean_mu} and \ref{eq:mu_norm_pair} yields
\(
    C_{\mathrm{mean}}
    = 1 - \sqrt{\,1 - \frac{k-1}{k}C_{\mathrm{pair}}\,}.
\)
The expression is strictly increasing in $C_{\mathrm{pair}}$ because its derivative
\begin{equation}
    \frac{\partial C_{\mathrm{mean}}}{\partial C_{\mathrm{pair}}}
    = \frac{k-1}{2k}\cdot \frac{1}{\sqrt{1 - \frac{k-1}{k}C_{\mathrm{pair}}}}
\end{equation}
is strictly positive for $k\ge 2$.

\end{proof}

This theorem states that \textbf{the more diverse the sentences that make up a text, the greater the average difference between the overall semantics of the text and the semantics of each individual sentence}.

\textbf{Empirical Validation of Theorem~\ref{thm:semantic-dilution}.}
Theorem~\ref{thm:semantic-dilution} establishes a strict monotonic relationship between sentence-level semantic diversity and text--sentence discrepancy under an idealized pooling assumption. We now empirically verify that this relationship also holds in practice for real Transformer-based embedding models, \textbf{where text embeddings are produced by encoding the concatenated text directly rather than by explicit sentence embedding averaging.}

\begin{wrapfigure}{r}{0.5\textwidth}
    \centering
    \vspace{0pt} 
    \includegraphics[width=0.5\textwidth]{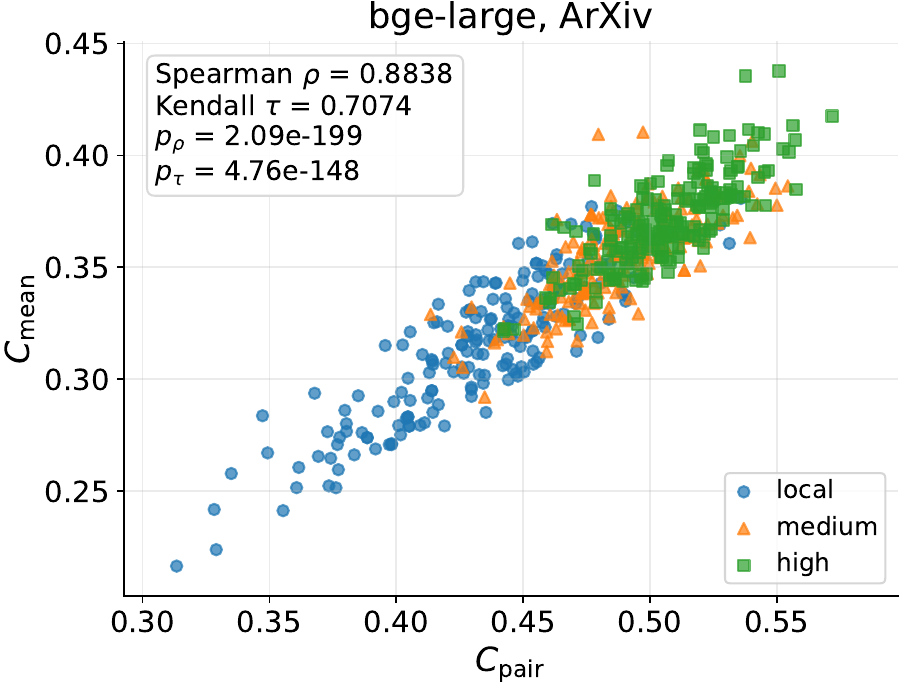}
    \vspace{-10pt}
    \caption{Scatter plot of $C_{\mathrm{mean}}$ vs.\ $C_{\mathrm{pair}}$ on ArXiv using bge-large model.}
    \label{fig:arxiv_cpair_cmean}
    \vspace{-10pt} 
\end{wrapfigure}

Using the ArXiv~\citep{arxiv_abstracts_hf} corpus and the bge-large model~\citep{xiao2024bge}, we construct sentence groups of size $k=10$ under three sampling regimes: local (consecutive sentences), medium (non-adjacent sentences) and high (uniformly random sentences from the corpus), repeating each regime 200 times. In each trial, we select 10 sentences according to the corresponding regime, concatenate them into a single text, encode the text once, and obtain the embeddings of the constituent sentences by encoding each sentence separately. We then compute the mean pairwise cosine distance between sentence embeddings, $C_{\mathrm{pair}}$, and the mean cosine distance between the text embedding and its constituent sentence embeddings, $C_{\mathrm{mean}}$. Additional results across different models and corpora are provided in Appendix~\ref{app:theorem1-empirical}.

\textbf{Correlation between $C_{\mathrm{pair}}$ and $C_{\mathrm{mean}}$.}
Figure~\ref{fig:arxiv_cpair_cmean} reports the scatter plot of $C_{\mathrm{mean}}$ versus $C_{\mathrm{pair}}$ under three sampling regimes. We observe a strong monotonic association:
Spearman's rank correlation is $\rho = 0.8838$ and Kendall's $\tau = 0.7074$ (both highly significant with $p \ll 10^{-100}$). \textbf{This empirical result supports Theorem~\ref{thm:semantic-dilution} in practice: as sentence-level semantic diversity increases (larger $C_{\mathrm{pair}}$), the discrepancy between the concatenated-text embedding and its constituent sentence embeddings also increases (larger $C_{\mathrm{mean}}$).}

\subsection{Representation Collapse and the Impact of Pooling Scale}

Theorem~\ref{thm:semantic-dilution} establishes that the pooled embedding of a semantically diverse text inevitably shifts away from each of its individual constituent sentences. By extending this geometric insight to a macroscopic level, we can derive direct mathematical evidence for representation collapse across a vector space. 

In this section, we analyze a scenario where localized pooling is applied to a normalized embedding space $V = \{v_1, \dots, v_n\}$. We characterize the dispersion of the space using the \textit{Mean Pairwise Distance} (MPD), calculated via dot-product or cosine distance. We show that such pooling not only reduces the MPD but that the severity of this collapse is strictly governed by the pooling scale $k$.

\begin{theorem}[Pooling-Induced Collapse and Scale Dependence]
\label{thm:pooling-collapse}
Given a set of $n$ unit-normalized vectors $V = \{v_1, ..., v_n\}$, let $u_i = \frac{1}{k+1} (v_i + \sum_{x \in S_i} x)$ be the pooled vector where $S_i$ is a set of $k$ vectors ($1 \le k < n-1$) sampled uniformly and independently at random from $V \setminus \{v_i\}$. The Mean Pairwise Distance (MPD) of the set $V$ and the expected MPD of the pooled set $U = \{u_1, \dots, u_n\}$ satisfies:
    
(i) $\text{MPD}(V) > \mathbb{E}[\text{MPD}(U)]$, proving an inevitable collapse. 

(ii) The reduction in MPD, defined as $\Delta\text{MPD} = \text{MPD}(V) - \mathbb{E}[\text{MPD}(U)]$, is strictly monotonically increasing with the pooling scale $k$.
\end{theorem}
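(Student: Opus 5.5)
The plan is to use the dot-product distance $1-x^\top y$ (the natural reading of ``dot-product distance'' for the unnormalized pooled vectors $u_i$). With it, $\mathrm{MPD}(X)=1-\frac{1}{n(n-1)}\sum_{i\neq j}x_i^\top x_j$, so everything reduces to the average off-diagonal inner product. The idea is to exploit independence: for $i\neq j$ the random sets $S_i$ and $S_j$ are drawn independently, hence $\mathbb{E}[u_i^\top u_j]=\mathbb{E}[u_i]^\top\mathbb{E}[u_j]$. The expected MPD of $U$ is therefore a deterministic quadratic expression in the vectors $\bar u_i:=\mathbb{E}[u_i]$, and no variance terms enter.

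\textbf{Step 1 (expected pooled vector).} Let $s=\sum_{\ell}v_\ell$ and $m=s/n$. Each of the $k$ samples in $S_i$ has mean $(s-v_i)/(n-1)$, so
\[
\bar u_i=\tfrac{1}{k+1}\Bigl(v_i+\tfrac{k}{n-1}(s-v_i)\Bigr).
\]
Collecting the coefficient on $v_i$ against the one on $m$ gives
\[
\bar u_i=(1-\lambda)v_i+\lambda m,\qquad \lambda=\lambda(k)=\frac{nk}{(n-1)(k+1)}.
\]
Two properties of $\lambda$ will be used. First, $\lambda>0$ for $k\ge1$. Second, $\lambda<1$ exactly when $nk<(n-1)(k+1)$, i.e.\ when $k<n-1$; this is where the hypothesis $k<n-1$ enters. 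Moreover, $\lambda(k)=\frac{n}{n-1}\cdot\frac{k}{k+1}$ is strictly increasing in $k$.

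\textbf{Step 2 (closed form for $\Delta\mathrm{MPD}$).} Let $q=\frac{1}{n(n-1)}\sum_{i\neq j}v_i^\top v_j$, so that $\mathrm{MPD}(V)=1-q$. Expand $\frac{1}{n(n-1)}\sum_{i\ne j}\bar u_i^\top\bar u_j$ using the following identities:
\[
\frac{1}{n(n-1)}\sum_{i\neq j}v_i^\top m=\|m\|^2,\qquad \|m\|^2=\frac{1}{n}+\frac{n-1}{n}\,q.
\]
The first holds because each $v_i^\top m$ is counted $n-1$ times. The second follows from $\|s\|^2=n+\sum_{i\neq j}v_i^\top v_j$ and the unit norms. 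Together they give
\[
\frac{1}{n(n-1)}\sum_{i\neq j}\bar u_i^\top\bar u_j=(1-\lambda)^2q+\bigl(1-(1-\lambda)^2\bigr)\|m\|^2.
\]
Using $\|m\|^2-q=(1-q)/n$, this becomes
\[
\Delta\mathrm{MPD}=\bigl(1-(1-\lambda)^2\bigr)\,\frac{1-q}{n}=\bigl(1-(1-\lambda)^2\bigr)\frac{\mathrm{MPD}(V)}{n}.
\]

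\textbf{Step 3 (conclusions).} For (i), note that $1-(1-\lambda)^2>0$ because $0<\lambda<1$. Also $1-q=\mathrm{MPD}(V)>0$ unless all $v_i$ coincide, since $v_i^\top v_j\le1$ with equality only when $v_i=v_j$. The nondegeneracy assumption $\mathrm{MPD}(V)>0$, which must be stated explicitly, therefore gives $\Delta\mathrm{MPD}>0$. For (ii), $1-(1-\lambda)^2$ is strictly increasing in $\lambda$ on $[0,1]$, and $\lambda$ is strictly increasing in $k$, so $\Delta\mathrm{MPD}$ is strictly increasing in $k$ for $1\le k<n-1$.

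\textbf{Main obstacle.} I expect the main difficulty to be the modeling choice rather than the algebra. If MPD is instead computed with cosine distance after renormalizing the $u_i$, then $\mathbb{E}[\mathrm{MPD}(U)]$ no longer factorizes through $\bar u_i$, because normalization is nonlinear and correlates with the sampling noise. I would therefore fix the dot-product version, which the theorem statement permits, and remark on the cosine case separately. I would also check the alternative squared-Euclidean distance: its mean term gives $(1-\lambda)^2\|v_i-v_j\|^2$, but the variance of $u_i$ adds a nonnegative term that would have to be bounded, so that version is less clean.
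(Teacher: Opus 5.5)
Your proposal is correct and follows the paper's proof essentially step for step. Your $\lambda(k)$ is exactly $1-\alpha(k)$ for the paper's shrinkage factor $\alpha(k)=\frac{n-1-k}{(k+1)(n-1)}$, so you reach the same closed form $\Delta\mathrm{MPD}=(1-\alpha(k)^2)\frac{1-S_V}{n}$ and read off the monotonicity in $k$ from that factor. You also state two points the paper uses without comment: $\mathbb{E}[u_i^\top u_j]=\mathbb{E}[u_i]^\top\mathbb{E}[u_j]$ follows from the independence of $S_i$ and $S_j$, and $S_V<1$ requires that the $v_i$ do not all coincide.
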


\begin{proof}
The original MPD of $V$ is $\text{MPD}(V) = 1 - S_V$, where $S_V = \frac{1}{n(n-1)} \sum_{i \neq j} v_i^\top v_j$ is the mean pairwise similarity (inner product similarity). Let $\mu = \frac{1}{n}\sum_{i=1}^{n} v_i$ be the global mean. The squared norm of the mean relates to $S_V$ as

\begin{equation}
    \|\mu\|^2
    = \left\|\frac{1}{n}\sum_{i=1}^n v_i\right\|^2
    = \frac{1}{n^2}\left(
        n + \sum_{1\le i \neq j\le n} v_i^\top v_j
      \right)
    = \frac{1 + (n-1)S_V}{n}
\end{equation}

Since $S_i$ is sampled uniformly from the remaining $n-1$ vectors, the expected sum in the pooling operation is $\mathbb{E}[\sum_{x \in S_i} x] = \frac{k}{n-1} \sum_{j \neq i} v_j = \frac{k}{n-1} (n\mu - v_i)$. Substituting this into $u_i$ yields:
\begin{equation}
    \mathbb{E}[u_i] = \frac{1}{k+1} \left( v_i + \frac{k}{n-1}(n\mu - v_i) \right) = \alpha(k) v_i + (1-\alpha(k))\mu
\end{equation}
where $\alpha(k) = \frac{n-1-k}{(k+1)(n-1)}$ is the shrinkage factor. For $1 \le k < n-1$, $0 < \alpha(k) < 1$. 

The expected pairwise similarity of $U$ requires computing the expected inner product $\mathbb{E}[u_i]^\top \mathbb{E}[u_j]$. We first expand this term using the expected values of $u_i$ and $u_j$:
\begin{equation}
\begin{aligned}
    \mathbb{E}[u_i]^\top \mathbb{E}[u_j] &= (\alpha v_i + (1-\alpha)\mu)^\top (\alpha v_j + (1-\alpha)\mu) \\
    &= \alpha^2 v_i^\top v_j + \alpha(1-\alpha)(v_i^\top \mu + v_j^\top \mu) + (1-\alpha)^2 \|\mu\|^2
\end{aligned}
    \label{eq:Emu_Emu}
\end{equation}
We then average this over all $i \neq j$ pairs to find $\mathbb{E}[S_U] = \frac{1}{n(n-1)} \sum_{i \neq j} \mathbb{E}[u_i]^\top \mathbb{E}[u_j]$. The first term of Equation~\ref{eq:Emu_Emu} naturally evaluates to $\alpha^2 S_V$. For the cross terms involving $\mu$, we can simplify the double summation by converting it into a single summation, since $\sum_{i=1}^n v_i = n\mu$:
\begin{equation}
    \sum_{1 \le i \neq j \le n} v_i^\top \mu 
    = \sum_{i=1}^n \sum_{\substack{j=1 \\ j \neq i}}^n v_i^\top \mu
    = (n-1) \sum_{i=1}^n v_i^\top \mu 
    = (n-1)(n\mu)^\top \mu 
    = n(n-1)\|\mu\|^2
\end{equation}
By symmetry, $\sum_{i \neq j} v_j^\top \mu$ yields the identical result. Thus, averaging these cross terms over the $n(n-1)$ pairs contributes $2\alpha(1-\alpha)\|\mu\|^2$. The third term of Equation~\ref{eq:Emu_Emu} is a constant scalar and contributes $(1-\alpha)^2 \|\mu\|^2$. Combining these components gives:
\begin{equation}
\begin{aligned}
    \mathbb{E}[S_U] &= \alpha^2 S_V + \left[ 2\alpha(1-\alpha) + (1-\alpha)^2 \right] \|\mu\|^2 \\
    &= \alpha^2 S_V + (1-\alpha^2)\|\mu\|^2
\end{aligned}
\end{equation}

The absolute reduction in dot-product MPD is then:
\begin{equation}
\begin{aligned}
    \Delta\text{MPD} &= \text{MPD}(V) - \mathbb{E}[\text{MPD}(U)] = \mathbb{E}[S_U] - S_V \\
    &= (1-\alpha(k)^2)(\|\mu\|^2 - S_V) = (1-\alpha(k)^2) \frac{1-S_V}{n}
\end{aligned}
\end{equation}
Since $S_V < 1$ and $\alpha < 1$, $\Delta\text{MPD} > 0$, proving the collapse (i). To analyze the impact of $k$, we differentiate $\alpha(k)$ with respect to $k$:
\begin{equation}
    \frac{\partial \alpha}{\partial k} = \frac{-(n-1)(k+1) - (n-1-k)(n-1)}{(k+1)^2(n-1)^2} = -\frac{n}{(k+1)^2(n-1)} < 0
\end{equation}
As $k$ increases, $\alpha(k)$ strictly decreases toward 0. Consequently, the term $(1-\alpha(k)^2)$ strictly increases, meaning $\Delta\text{MPD}$ grows larger with $k$ (ii). In the limit $k \to n-1$ (global pooling), $\alpha \to 0$ and all vectors collapse to the global mean $\mu$.

For cosine distance, the collapse is further accelerated by norm decay. According to Jensen's inequality, $\|u_i\| < 1$. As $k$ increases, $u_i$ is formed by averaging more divergent unit vectors, leading to a smaller $\|u_i\|$. Since the cosine similarity $\frac{u_i^\top u_j}{\|u_i\|\|u_j\|}$ has an increasing numerator and a decreasing denominator, the MPD reduction for cosine distance is strictly greater and grows more rapidly with $k$ than in the dot-product case.
\end{proof}

This theorem provides a principled explanation for the sensitivity of embedding-based systems to the pooling window size. \textbf{In the Transformer structure that employs pooling mechanism, increasing the context or the number of neighbors (increasing $k$) does not merely smooth the signal but mathematically guarantees a deeper collapse of the representation space toward a non-discriminative global mean.}

\textbf{Empirical Validation of Theorem~\ref{thm:pooling-collapse}.} 
To intuitively illustrate the geometric implications of Theorem~\ref{thm:pooling-collapse}, we simulate the pooling-induced collapse in a 3D vector space. We initialize $n=100$ vectors uniformly distributed on the surface of a unit sphere (representing the original embedding space $V$). We then apply the randomized pooling operation on each vector using varying pooling scales $k \in \{2, 4, 8\}$ and plot the resulting spatial distribution of the pooled vectors $U$.

\begin{figure}[htbp]
    \centering
    \vspace{0pt}
    \includegraphics[width=\linewidth]{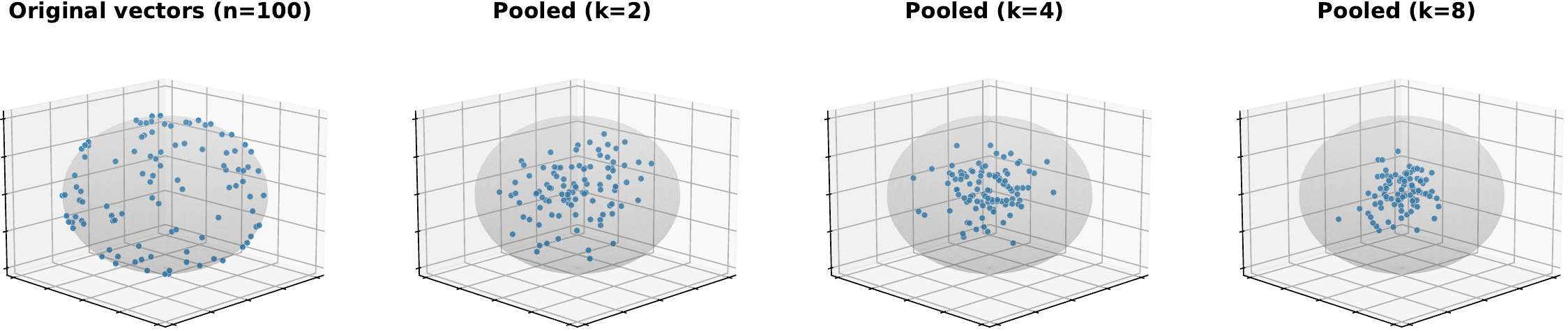}
    \vspace{0pt}
    \caption{Spatial distribution of $n=100$ vectors before (Original) and after randomized pooling with varying scales $k$. As $k$ increases, vectors undergo severe norm decay and angular contraction toward the global mean, visually confirming the strict monotonic MPD reduction proved in  Theorem~\ref{thm:pooling-collapse}.}
    \vspace{-10pt}
    \label{fig:collapse_simulation}
\end{figure}

As shown in Figure \ref{fig:collapse_simulation}, the original vectors are fully dispersed. However, as the pooling scale $k$ increases, the vectors cluster in an increasingly narrow spatial cone, directly reflecting the strict and monotonically increasing reduction in MPD as $k$ grows. This visual simulation corroborates our theoretical proof: contextual aggregation intrinsically acts as a gravitational pull toward the semantic center, and expanding the pooling window ($k$) strictly accelerates this geometric collapse.

\subsection{Implications for Representation Collapse and Anisotropy}

Theorem~\ref{thm:semantic-dilution} and Theorem~\ref{thm:pooling-collapse} jointly provide a geometric explanation for the representational pathologies commonly observed in Transformer-based embeddings, while also clarifying the exact conditions under which pooling harms retrieval.

\textbf{Semantic smoothing and dilution.} 
Theorem~\ref{thm:semantic-dilution} mathematically demonstrates that pooling acts as a smoothing operator. As the semantic diversity among constituent sentences increases, the aggregated embedding inevitably dilutes specific meanings, shifting away from individual sentence representations.

\textbf{Anisotropy as a geometric consequence.} 
Theorem~\ref{thm:pooling-collapse} extends this insight to a macro perspective, proving that pooling strictly reduces the MPD of the vector space. Crucially, this indicates that global anisotropy and representation collapse are not inherent defects of the embedding model, but rather geometric inevitabilities of pooling diverse semantics.

\textbf{Semantic shift as the missing causal factor.}
Synthesizing both theorems reveals a critical insight: pooling itself does not inherently harm retrieval. If a text's constituent sentences are highly similar (low diversity), Theorem~\ref{thm:semantic-dilution} ensures that the pooled vector remains faithful to its original components, and Theorem~\ref{thm:pooling-collapse} implies minimal macroscopic spatial contraction. Severe collapse and retrieval degradation occur only when texts exhibit substantial internal semantic variation. This observation explicitly isolates intra-text variation as the true culprit, motivating our formalization of \textit{semantic shift} in the next section.

\section{Semantic Shift: Formalization and Properties}
\label{sec:semantic-shift}

In natural language, semantics evolve gradually. Adjacent sentences typically share strong local coherence, while sentences farther apart may describe different entities, events, or topics. This structured progression is neither random noise nor model-induced drift; rather, it reflects the intrinsic content-driven evolution of meaning. We refer to this phenomenon as \textbf{semantic shift}. In this section, we formalize semantic shift and argue why it offers a fundamental perspective on embedding pathologies.
A natural attempt to capture semantic evolution is to sum the distances between consecutive sentences.

\begin{definition}[Local Semantic Evolution]
\label{def:naive_semantic_shift}
For sentence embeddings $e_1,\dots,e_k$, the local semantic evolution up to length $k$ is
\(
\mathrm{Local}(k)
= \sum_{i=1}^{k-1} \bigl( 1 - \cos(e_i, e_{i+1}) \bigr), 
\)
with the convention $\mathrm{Local}(1) = 0$.
\end{definition}

This Local Semantic Evolution shows where sentences gradually move away from earlier ones in a coherent direction. However, purely local measures cannot distinguish coherent progression from topic mixing, nor can they capture how far the sentence set spreads in the embedding space. Since semantic dilution (Theorem~\ref{thm:semantic-dilution}) depends on the global diversity of sentence embeddings, a more complete definition must incorporate both local and global information.

We quantify how semantically dispersed a set of $k$ sentences is using their mean pairwise distance.

\begin{definition}[Semantic Dispersion]
\label{def:mpd_k}
Given sentence embeddings $e_1,\dots,e_k$, the \emph{semantic dispersion} is
\(
\mathrm{Disp}(k)
= \frac{2}{k(k-1)} 
  \sum_{1 \le i < j \le k} \bigl(1 - \cos(e_i, e_j)\bigr),
\)
with the convention $\mathrm{Disp}(1) = 0$.
\end{definition}

A larger $\mathrm{Disp}(k)$ indicates that the sentences occupy a wider region in the embedding space.

Semantic shift occurs when the local semantic evolution interacts with the global semantic dispersion. When both are small, the text maintains a stable topic; when both are large, semantics evolve into distinct conceptual regions, creating strong dilution under pooling.

We therefore define semantic shift as the interaction between these two factors.

\begin{definition}[Semantic Shift]
\label{def:semantic-shift}
For a sequence of $k$ sentence embeddings $e_1,\dots,e_k$, the \emph{semantic shift} is defined as
\(
\mathrm{Shift}(k)
= \mathrm{Local}(k) \cdot \mathrm{Disp}(k).
\)
\end{definition}

\section{A New Lens on Representation Collapse and Anisotropy}
\label{sec:length-collapse}
Theoretical results in Sections~\ref{sec:semantic-dilution} and \ref{sec:semantic-shift} suggest that embedding pathologies commonly attributed to text length may, in fact, be driven by pooling and semantic shift. In this section, we design controlled experiments to disentangle these factors and show that pooling and semantic shift are the primary determinants of concentration, anisotropy, and retrieval degradation.

\subsection{How Semantic Shift Drives Length Collapse and Anisotropy}
\label{subsec:shift-vs-concentration}

\paragraph{Experimental setup.}
In the experiments, we used a diverse set of embedding models of different types and scales, including bge-large~\citep{xiao2024bge}, e5-large~\citep{wang2022e5}, all-mpnet~\citep{song2020mpnet}, gte-large~\citep{li2023gte}, and text-embedding models~\citep{openai2024embeddings}, covering open source and closed source systems. On the corpus side, we also evaluated a broad range of text sources, including academic documents, long-form novels, knowledge-focused articles, and encyclopedic materials. Due to space limitations, we present only a subset of the results in the main paper—showcasing selected models and selected corpora. Additional results, along with full details on the models and datasets used, are provided in the Appendix~\ref{sec:appendix_models_corpora}.

For each corpus, we segment the text into sentences to obtain an ordered sequence
\[
S = (s_1, s_2, \dots, s_n).
\]
We then construct longer "sentences" by concatenating sentences in \(S\) according to three patterns, and embed all resulting sequences with a fixed PLM encoder (e.g., bge-large). For each resulting sequence, we measure the embedding concentration using MPD.

\textbf{Repeat concatenation.} Each sentence is repeated multiple times:
\[
\begin{aligned}
S2^{\text{rep}} &= (s_1 s_1,\; s_2 s_2,\; \dots,\; s_n s_n), 
S5^{\text{rep}} = (s_1^5,\; s_2^5,\; \dots,\; s_n^5), \\
S10^{\text{rep}} &= (s_1^{10},\; s_2^{10},\; \dots,\; s_n^{10}),
\end{aligned}
\]
where \(s_i^m\) denotes \(s_i\) repeated \(m\) times. Here, length increases but the underlying semantics of each unit do not change.

\textbf{Sequential concatenation.} Each sentence is concatenated with its immediate successors:
\[
\begin{aligned}
S2^{\text{seq}} &= (s_1 s_2,\; s_2 s_3,\; \dots,\; s_{n-1} s_n,\; s_n), 
S5^{\text{seq}} = (s_1 \dots s_5,\; s_2 \dots s_6,\; \dots,\; s_n), \\
S10^{\text{seq}} &= (s_1 \dots s_{10},\; s_2 \dots s_{11},\; \dots,\; s_n).
\end{aligned}
\]
Here, length increases and semantics evolve smoothly within a local window along the original text.

\textbf{Random concatenation.} Each sentence is concatenated with randomly sampled sentences from the entire corpus:
\[
\begin{aligned}
S2^{\text{rand}} &= (s_1 s_{i_1},\; s_2 s_{i_2},\; \dots,\; s_{n-1} s_{i_{n-1}},\; s_n), 
S5^{\text{rand}} = (s_1 s_{i_1} s_{j_1} s_{k_1} s_{\ell_1},\; \dots,\; s_n), \\
S10^{\text{rand}} &= (s_1 \dots,\; s_2 \dots,\; \dots,\; s_n),
\end{aligned}
\]
where \(s_{i_1}, s_{j_1}, s_{k_1}, s_{\ell_1}, \dots\) are sentences sampled independently from \(S\). Here, both length and semantic heterogeneity increase.

In all three patterns, we embed the resulting sequences and compute the MPD of the embeddings for \(S\), \(S2\), \(S5\), and \(S10\) (where we omit the superscripts in the figure labels for brevity). A lower MPD indicates a stronger embedding concentration, corresponding to the phenomena described by length collapse and anisotropy.

\textbf{Results and analysis.} Figure~\ref{fig:mpd_repeat_seq_rand} summarizes the MPD changes across two corpora (ArXiv~\citep{arxiv_abstracts_hf} and Alice's Adventures in Wonderland~\citep{project_gutenberg,pg_alice}) and different concatenation patterns; the embedding model is bge-large~\citep{xiao2024bge}

For the ArXiv corpus, under repeat concatenation, MPD decreases slowly as we move from \(S\) to \(S10\), while under sequential concatenation, the MPD decreases more rapidly. Under random concatenation, MPD decreases much more sharply. From \(S\) to \(S10\), the drop in MPD is roughly three times larger than in the repeat pattern.

This indicates that pure lengthening (repeat) induces some concentration but not dramatically. In contrast, sequential and random concatenation injects a strong semantic shift within each concatenated unit, causing semantics to be diluted, resulting in a much more severe embedding concentration.

\begin{wrapfigure}{r}{0.5\textwidth}
  \centering
  \vspace{-10pt}
  \includegraphics[width=0.5\textwidth]{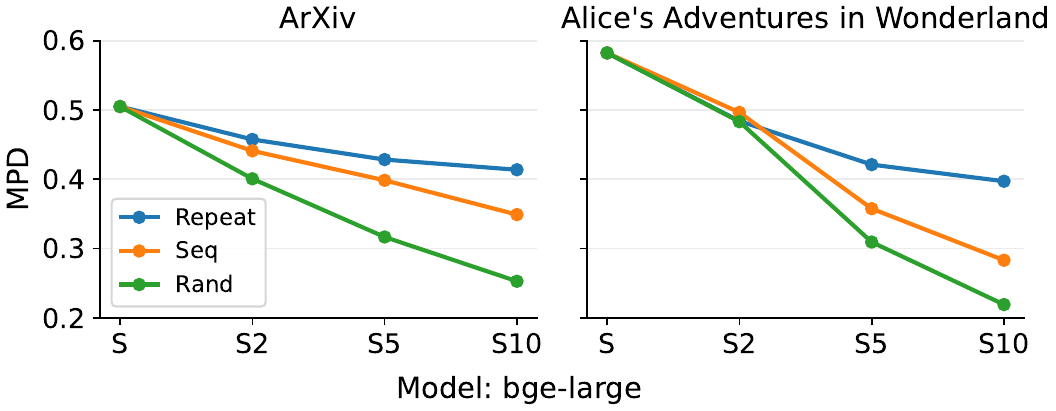}
  \vspace{-10pt}
  \caption{Variation of MPD under different sentence concatenation patterns across two corpora.}
  \label{fig:mpd_repeat_seq_rand}
  \vspace{-10pt}
\end{wrapfigure}

For the Alice's Adventures in Wonderland corpus, we observe a similar overall trend, except that the range of variation in MPD is wider, which is likely due to the different types of corpora.

Across two corpora, the MPD drop from \(S\) to \(S10\) is larger under sequential and random concatenation, again supporting the view that strong internal semantic variation, rather than length alone, is the main driver of severe embedding concentration.

To directly quantify how semantic shift contributes to embedding concentration, we next measure the semantic shift defined in Definition~\ref{def:semantic-shift} under the three concatenation patterns. Specifically, for each corpus and each pattern, we take the $S10$ variant and compute semantic shift at different hop distances: 1-hop, 2-hop, …, 9-hop. This evaluates how semantics evolve as we move further along the concatenated units.

\begin{wrapfigure}{r}{0.5\textwidth}
  \centering
  \vspace{-10pt}
  \includegraphics[width=0.5\textwidth]{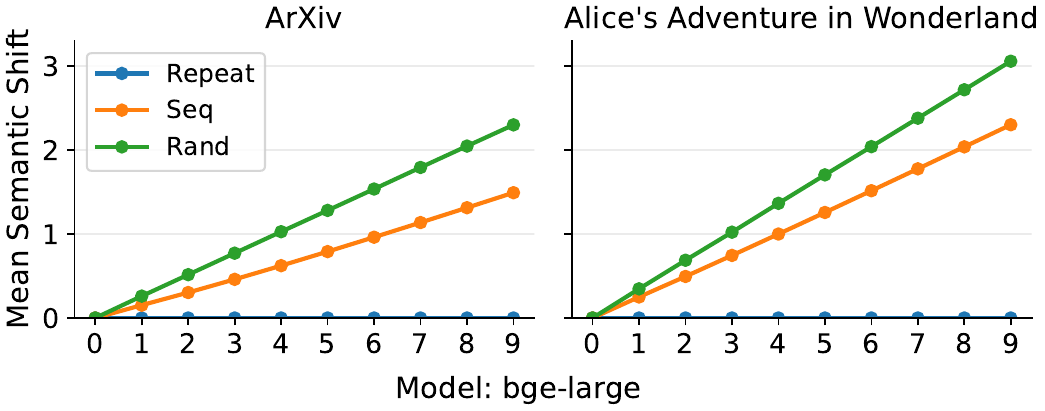}
  \vspace{-10pt}
  \caption{Mean semantic shift increases with hop distance across two corpora under different concatenation modes. The $x$-axis is the hop distance; the $y$-axis is mean semantic shift.}
  \label{fig:semantic_shift}
  \vspace{-10pt}
\end{wrapfigure}

Figure~\ref{fig:semantic_shift} reports the mean semantic shift for the two corpora. In the ArXiv corpus, the random concatenation pattern produces a semantic shift substantially higher than the sequential pattern at all hop distances. This confirms that random mixing injects strong semantic variation even within the same concatenated unit. 

For the Alice's Adventures in Wonderland corpus, the semantic shift exhibited by the random and sequential patterns becomes more similar, reflecting the fact that long narrative texts naturally contain topic transitions and plot developments.

Crucially, when we compare Figure~\ref{fig:semantic_shift} with the MPD results in Figure~\ref{fig:mpd_repeat_seq_rand}, the relationship becomes clear: the degree of embedding concentration aligns almost perfectly with the measured semantic shift. Sequential and random concatenation, which produce a larger semantic shift, also induce significantly stronger MPD reduction.

These results provide quantitative evidence for our central claim.
\textbf{Pooling and semantic shift are the dominant factors driving embedding concentration}, which also motivate our next question. \textbf{When and why does such concentration actually hurt downstream tasks?}

\subsection{Impact on Downstream Retrieval and Revisiting Anisotropy}
\label{subsec:shift-vs-anisotropy}

Anisotropy in embedding spaces, where vectors collapse into a narrow cone, is often reported to be harmful to downstream tasks such as retrieval. However, empirical findings are mixed: some work finds clear negative impacts\citep{gao2018representation, huang2021whiteningbert, gao2021simcse}, while others observe little to no degradation\citep{ait2023anisotropy}.

Building on our theoretical analysis, we hypothesize that anisotropy is harmful primarily when it is induced by a strong semantic shift. To test this, we conduct retrieval experiments on the same corpora and concatenation patterns.

\textbf{Self-overlap as a robustness measure.} For each corpus \(S = (s_1,\dots,s_n)\), we randomly sample 1000 sentences as a query set \(Q\). For each query \(q \in Q\), we perform nearest-neighbor search in the embedding space under the following settings:

\textbf{Baseline}: retrieve top-\(k\) nearest neighbors from the original corpus \(S\).

\textbf{Concatenated variants}: retrieve top-\(k\) neighbors from each of \(S2, S5, S10\) under \emph{repeat}, \emph{sequential}, and \emph{random} patterns.

We treat the top-\(k\) neighbors from \(S\) as a proxy for ground truth, since the set necessarily includes the query itself and its most similar sentences in the original unmodified corpus. For each variant \(S'\) (\(S2, S5, S10\)) and each query \(q\), we compute the \emph{self-overlap@k}:
\[
\text{Overlap@k}(q, S') = \frac{\bigl| \text{Top@k}(q, S) \cap \text{Top@k}(q, S') \bigr|} {k},
\]
and then average over all queries. Higher self-overlap@k means that retrieval on the transformed corpus preserves the same neighbors as the original corpus, indicating weaker damage to retrieval.

\begin{wrapfigure}{r}{0.5\textwidth}
  \centering
  \vspace{-10pt}
  \includegraphics[width=0.5\textwidth]{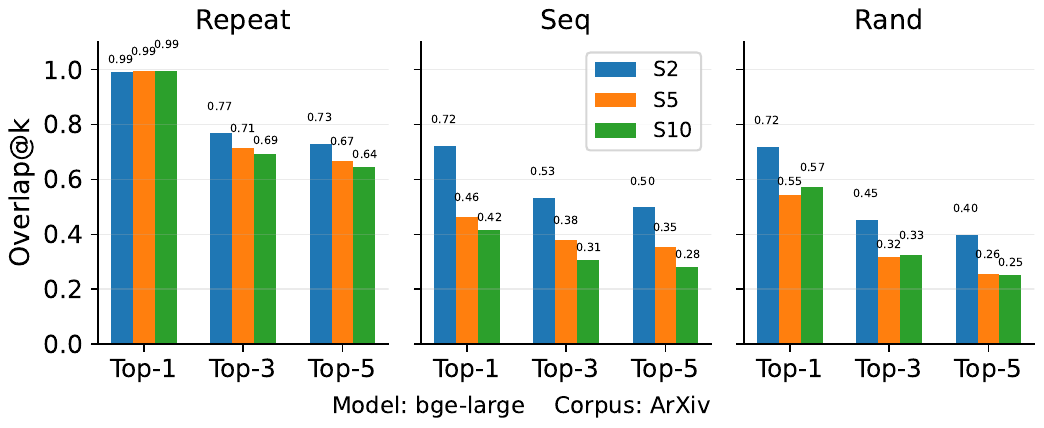}
  \vspace{-10pt}
  \caption{Average self-overlap@k between retrieval results on the original corpus $S$ and its concatenated variants ($S2$, $S5$, $S10$) under repeat, sequential, and random patterns. Higher bars indicate stronger semantic preservation and less retrieval damage.}
  \label{fig:query_anisotropy}
  \vspace{-10pt}
\end{wrapfigure}

\textbf{Results.} Figure~\ref{fig:query_anisotropy} shows the average overlap@k for \(k \in \{1,3,5\}\) across concatenation patterns.

For the \textbf{repeat} pattern: 
Overlap@1 is almost equal to 1.0 for \(S2, S5, S10\), which means that the nearest neighbor is always preserved. 
Overlap@3 and Overlap@5 remain high and stable as length increases.

In contrast, for \textbf{sequential} and \textbf{random} patterns: Overlap@1 drops to about 0.7 and decreases further as we move from \(S2\) to \(S10\). Overlap@3 and Overlap@5 further deteriorate, with random concatenation consistently yielding the lowest overlap.

Across more corpora and different embedding models (see Appendix~\ref{sec:appendix_shift_across_models}, \ref{sec:appendix_overlap_all_models} for full results), the same pattern holds: 
\textbf{Anisotropy driven by length (repeat) leads to mild embedding concentration and has small harm to retrieval.
Anisotropy driven by semantic shift (sequential and random) simultaneously causes strong concentration and substantial retrieval damage.}

In summary, our experiments indicate that anisotropy is not inherently detrimental. It becomes harmful when it arises from semantic shift that blends multiple disparate semantics into single embedding units, thereby collapsing the relative distances on which retrieval relies. This provides a principled explanation of the mixed findings in the literature and highlights semantic shift as the fundamental challenge behind many observed embedding pathologies.

\section{Conclusion}
\label{sec:conclusion}

This paper establishes a unified theoretical framework to explain this phenomenon: Theorem~\ref{thm:semantic-dilution} proves that semantic diversity intrinsically causes micro-level semantic dilution, while Theorem~\ref{thm:pooling-collapse} demonstrates that contextual pooling mathematically guarantees macro-level spatial aggregation and a strictly reduced MPD. Through controlled experiments, we validate these geometric insights, confirming that semantic shift dictates when anisotropy becomes severely harmful to retrieval. 

Beyond diagnosis, semantic shift provides a controllable signal for the design of practical algorithms. As detailed in Appendix~\ref{app:ss_splitter}, we instantiate a Semantic Shift Splitter that leverages this signal for adaptive text segmentation, achieving strong empirical improvements over standard splitters.

Finally, because pooling and attention mechanisms are ubiquitous across Transformer architectures, our theoretical insights extend beyond embedding models. The pooling-induced representation collapse we formally describe offers a compelling geometric explanation for the challenges Large Language Models (LLMs) face with long-context input. Exploring shift-aware aggregation mechanisms to mitigate context dilution in LLMs represents a promising direction for future work.

\bibliographystyle{unsrtnat}
\bibliography{custom}

\clearpage
\onecolumn
\tableofcontents

\appendix
\section*{APPENDIX}
\section{Limitations}
Our analysis interprets Transformer text embeddings through a pooling lens, which cleanly exposes the geometry behind semantic smoothing. Although this abstraction matches common practice (mean/CLS-style pooling) and is empirically supported in our study, it does not attempt to model all fine-grained token interactions across layers. 

We define the semantic shift via cosine-distance-based local evolution and global dispersion. Other reasonable choices (e.g., alternative similarity metrics, different window sizes, or discourse-aware weighting) could be plugged into the same framework and may further refine sensitivity in certain domains. Our goal is to establish a simple and computable measure that is stable across models and corpora, not to claim a unique definition.

For readability, the main text presents representative results on a subset of models/corpora, with additional experiments provided in the appendix. Although the observed trends are consistent across all tested settings (models and corpora), extending coverage to more languages and additional specialized domains would further broaden the empirical picture.

\section{Extended Related Work}
\label{sec:related_work_extended}

\paragraph{Text embeddings and dense representation learning.}
Learning vector representations for text has long been central to information retrieval and semantic matching.
Early approaches focused on static distributional embeddings, such as Word2Vec \citep{mikolov2013distributed}, GloVe \citep{pennington2014glove}, and fastText \citep{bojanowski2017enriching}.
More recently, pretrained language models (PLMs), including BERT \citep{devlin2019bert}, RoBERTa \citep{liu2019roberta}, and GPT-2 \citep{radford2019gpt2}, have enabled context-sensitive representations that significantly improve downstream performance.
For retrieval and similarity-based tasks, dense bi-encoder architectures \citep{karpukhin2020dpr,izacard2021contriever} have become a standard alternative to sparse lexical methods such as BM25 \citep{robertson2009bm25}, while large-scale benchmarks like BEIR \citep{thakur2021beir} and MTEB \citep{muennighoff2023mteb} reveal persistent challenges in robustness and generalization across domains.

\paragraph{Sentence embedding learning and representation geometry.}
A substantial body of work studies how to learn sentence-level embeddings that faithfully capture semantic similarity.
Sentence-BERT \citep{reimers2019sbert} introduced siamese architectures for efficient cosine-based retrieval, which were later enhanced by contrastive learning objectives.
Representative methods include SimCSE \citep{gao2021simcse}, ConSERT \citep{yan2021consert}, Mirror-BERT \citep{liu2021mirrorbert}, PromptBERT \citep{jiang2022promptbert}, DiffCSE \citep{chuang2022diffcse}, and TSDAE \citep{wang2021tsdae}.
More recent embedding families, such as E5 \citep{wang2022e5}, BGE \citep{xiao2024bge}, and INSTRUCTOR \citep{su2023instructor}, aim to produce general-purpose representations aligned with diverse instructions and tasks.

Beyond accuracy, increasing attention has been paid to the \emph{geometry} of embedding spaces.
Prior work shows that contextual embeddings are often highly \emph{anisotropic}, concentrating in a narrow cone rather than being uniformly distributed \citep{ethayarajh2019contextual}.
Several mitigation strategies have been proposed, including removing dominant directions \citep{mu2018allbutthetop,arora2017sif,raunak2019pca}, whitening-based normalization \citep{su2021whitening,huang2021whiteningbert}, and flow-based transformations \citep{li2020sentence}.
Recent analyses caution that global concentration metrics may not reliably predict semantic quality \citep{timkey2021rogue,fusterbaggetto2022anisotropy}.

\paragraph{Long-context representations and length-induced collapse.}
Long texts pose a persistent challenge for embedding-based retrieval. Recent work formalizes this phenomenon as length-induced embedding collapse, attributing it to the low-pass filtering behavior of self-attention, where repeated attention operations progressively suppress high-frequency semantic variations and amplify dominant low-frequency components, causing representations of longer texts to become increasingly similar and less discriminative~\citep{zhou2025lengthcollapse}.
Complementary efforts benchmark long-context embeddings and retrieval~\citep{zhu2024longembed}. Parallel advances in long-context modeling and efficient attention enable substantially longer sequences~\citep{beltagy2020longformer,zaheer2020bigbird,press2022train,dao2022flashattention}. However, length alone does not fully explain retrieval difficulty: texts of identical length can exhibit vastly different embedding behaviors depending on how their semantics evolve internally.

\paragraph{Discourse structure, topic evolution, and semantic shift.}
The evolution of meaning in a document has been extensively studied in linguistics and discourse theory.
Classical frameworks characterize discourse coherence and structure \citep{grosz1986attention,mann1988rst}, while computational models operationalize coherence through entity transitions and local continuity \citep{barzilay2008coherence,guinaudeau2013graph}.
Text segmentation and topic boundary detection formalize discourse evolution, with early unsupervised methods such as TextTiling \citep{hearst1997texttiling} and subsequent statistical approaches \citep{choi2000segmentation,utiyama2001segmentation,galley2003segmentation,malioutov2006mincut,eisenstein2008bayesian}.
Neural models also address segmentation and coherence with supervised or hierarchical architectures \citep{koshorek2018textseg}.

A related literature studies semantic change over time using temporal embeddings and alignment techniques \citep{hamilton2016diachronic,kutuzov2018survey,rudolph2018dynamic,yao2018dynamic}.
Although studies focus on evolution across corpora or time periods, they share a key insight: semantic variation is best understood as a process with measurable rates, rather than as a single static distance.

\section{Token-Level Pooling and Its Sentence-Level Interpretation}
\label{app:pooling}

Transformer encoders construct text embeddings by aggregating contextualized token representations through a fixed pooling mechanism. In this section, we show that the core contextual aggregation in any pooling-based embedding model relies on a convex combination mechanism, which inevitably smooths and dilutes the semantics of a multi-sentence text. This formulation provides a helpful idealized abstraction for Section 2 in the main text, helping us understand anisotropy, length collapse, and their connection to semantic shift.

Let an input text be tokenized as $(x_1,\dots,x_N)$, and let a Transformer encoder produce contextualized token embeddings:
\begin{equation}
    h_1, h_2, \dots, h_N \in \mathbb{R}^d.
\end{equation}
To obtain a fixed-length text embedding, widely used models apply a pooling operator to derive a global sequence representation. Two pooling mechanisms dominate practice. At their core, the contextual aggregation step in both can be conceptually generalized as a weighted sum $c = \sum_{t=1}^N \gamma_t h_t$ under the constraint $\sum_t \gamma_t = 1$:

\paragraph{Mean pooling.}  
Used in Sentence-BERT~\citep{reimers2019sbert}, SimCSE~\citep{gao2021simcse}, E5~\citep{wang2022e5}, BGE~\citep{xiao2024bge}, and many other models. Here, the final representation $z$ is exactly the average of all tokens, meaning the weights are uniformly distributed: $\gamma_t = \frac{1}{N}$.
\begin{equation}
    z = c = \frac{1}{N} \sum_{t=1}^N h_t.
\end{equation}

\paragraph{Attention-weighted pooling.}  
Used in vanilla BERT~\citep{devlin2019bert,clark2019does,ethayarajh2019contextual}, where the \texttt{[CLS]} token gathers global sequence information through self-attention. Modern Transformer architectures compute Multi-Head Attention (MHA) projections and apply position-wise Feed-Forward Networks (FFNs). However, as a theoretical approximation to isolate the routing mechanism, we can abstract the core context-absorbing process. Ignoring residual connections, linear projections of MHA, and non-linearities for this abstraction, the fundamental routing of a single attention head operates as:

\begin{equation}
    c_{\mathrm{CLS}} \approx \sum_{t=1}^N \alpha_t\, h_t,
\end{equation}
where $\gamma_t = \alpha_t \ge 0$ are the attention-derived weights satisfying $\sum_t \alpha_t = 1$. (For this idealized abstraction, we treat the contextualized token embeddings $h_t$ directly as the value vectors, abstracting away the linear value projection). Thus, at the routing level, the mechanism responsible for absorbing context acts conceptually as a convex combination of token representations.

\paragraph{Deriving the Sentence-Level Formulation.}
Since tokens naturally organize into sentences, we can formalize the transition from token-level to sentence-level representations for this core aggregation step $c$. Let the text consist of $k$ sentences, partitioning the index set of $N$ tokens into $k$ disjoint subsets $\mathcal{S}_1, \dots, \mathcal{S}_k$. We can rewrite the generalized aggregation by grouping the tokens by their corresponding sentences:
\begin{equation}
    c = \sum_{i=1}^k \left( \sum_{t \in \mathcal{S}_i} \gamma_t h_t \right).
\end{equation}
Let $w_i = \sum_{t \in \mathcal{S}_i} \gamma_t$ denote the total pooling weight assigned to the $i$-th sentence. Assuming $w_i > 0$, we define the aggregated representation of the $i$-th sentence, $e_i$, as the normalized weighted sum of its constituent token embeddings:
\begin{equation}
    e_i = \frac{1}{w_i} \sum_{t \in \mathcal{S}_i} \gamma_t h_t.
\end{equation}

Crucially, because the token representations $h_t$ are outputs from deep Transformer layers, they are already heavily contextualized across the entire document (i.e., tokens in sentence $i$ have attended to tokens in other sentences). Therefore, $e_i$ technically represents an in-context span representation rather than an independently encoded sentence embedding (such as $v_i$ modeled in Theorem 1).

Substituting $e_i$ back into the aggregation equation yields:
\begin{equation}
    c = \sum_{i=1}^k w_i\, e_i, \quad \text{where} \quad \sum_{i=1}^k w_i = \sum_{i=1}^k \sum_{t \in \mathcal{S}_i} \gamma_t = 1.
\end{equation}

This derivation demonstrates that, under a theoretical approximation, the core contextual aggregation step ($c$) functions as a convex combination of these in-context span representations $e_i$. While practical models mathematically overcomplicate this with MHA concatenations, linear projections, FFNs, and LayerNorms, the underlying geometric dynamics of semantic mixing are driven by this convex routing process. This abstraction justifies the analysis framework in the main text: we can safely treat a multi-sentence document's contextual representation as being pooled directly from a set of constituent span vectors, allowing us to quantify how semantic diversity among these spans forces the final representation into a compromised, diluted direction.

\section{Embedding Models, Corpora, and Preprocessing}
\label{sec:appendix_models_corpora}

This appendix describes the embedding models and corpora used throughout our experiments, together with the unified preprocessing pipeline used to convert each corpus into an ordered sentence sequence $S=(s_1,\dots,s_n)$.

\subsection{Embedding models}
\label{sec:appendix_models}
We consider a diverse set of embedding models that span open-source and proprietary systems, covering different training paradigms and embedding-space geometries. Table~\ref{tab:model_compare} summarizes their key characteristics, including the dimension of the output, the architectural foundation, and the information of the release.

\begin{table*}[htb]
\centering
\small
\setlength{\tabcolsep}{6pt}
\caption{Embedding models used in our experiments. "Dim." denotes the output embedding dimensionality.}
\begin{tabular}{p{1.5cm}|p{1.5cm}|p{3cm}|c|p{1.5cm}|p{1.2cm}|p{1.5cm}}
\hline
\textbf{Model (full name)} & \textbf{Abbreviation} & \textbf{Architecture / key traits} & \textbf{Dim.} & \textbf{Provider} & \textbf{License} & \textbf{References} \\
\hline
bge-large-en-v1.5 & bge-large &
Transformer bi-encoder for dense retrieval; contrastive training with strong general-purpose embedding behavior. &
1024 & BAAI & Open source & \citep{xiao2024bge,chen2024bge} \\
\hline
e5-large-v2 & e5-large &
Transformer bi-encoder trained via weakly-supervised contrastive pretraining; query/passage style prompting is commonly used in the E5 family. &
1024 & Microsoft & Open source & \citep{wang2022e5} \\
\hline
all-mpnet-base-v2 & all-mpnet &
Sentence-Transformers bi-encoder built on MPNet-base; mean pooling for sentence embeddings; widely used strong baseline. &
768 & Sentence-Transformers & Open source & \citep{reimers2019sbert, song2020mpnet} \\
\hline
gte-large & gte-large &
General Text Embeddings (GTE); Transformer encoder optimized for retrieval-style embedding. &
1024 & Alibaba & Open source & \citep{li2023gte} \\
\hline
text-embedding-3-large & text-embedding &
Proprietary API embedding model; high-dimensional embeddings designed for general semantic matching and retrieval. &
3072 & OpenAI & Closed source & \citep{openai2024embeddings} \\
\hline
\end{tabular}
\label{tab:model_compare}
\end{table*}

\subsection{Corpora}
\label{sec:appendix_corpora}
Table~\ref{tab:corpus_compare} summarizes all corpora used in our experiments. These corpora cover heterogeneous discourse regimes (technical abstracts, encyclopedic entries, knowledge essays, and long-form narratives), enabling us to analyze semantic shifts under substantially different topic-evolution patterns.
Unless otherwise noted, we split each document into sentences, preserve the original order, and concatenate all sentences into a single ordered sequence $S=(s_1,\dots,s_n)$. For very large corpora (ArXiv and Wikipedia), we restrict to the first 5{,}000 documents in the dataset in order to control runtime and improve reproducibility.

\begin{table*}[h!]
\centering
\small
\setlength{\tabcolsep}{6pt}
\caption{Corpora used in our experiments. We cover technical, encyclopedic, essay-style, and narrative texts. For large multi-document corpora (ArXiv and Wikipedia), we use the first 5000 documents for efficiency and reproducibility.}
\begin{tabular}{p{2.5cm}|p{1.6cm}|p{5.0cm}|p{3.0cm}}
\hline
\textbf{Corpus (full name)} & \textbf{Abbreviation} & \textbf{Characteristics} & \textbf{References} \\
\hline
ArXiv abstracts (common-pile/arxiv\_abstracts) & ArXiv &
Large-scale scientific paper abstracts. Highly technical, information-dense, and relatively short documents with strong domain-specific terminology.
We use the first 5000 abstracts in dataset order. & \citep{arxiv_abstracts_hf}. \\
\hline
Alice's Adventures in Wonderland (Project Gutenberg) & Alice &
Single long-form narrative novel (fiction). Natural discourse progression with plot-driven topic transitions and stylistic variation.
We treat the entire book as one document and keep the original reading order. & \citep{project_gutenberg,pg_alice}. \\
\hline
Pride and Prejudice (Project Gutenberg) & Pride &
Single long-form narrative novel (fiction). Long-range thematic development and chapter-level transitions.
We treat the entire book as one document and keep the original reading order. & \citep{project_gutenberg,pg_pride}. \\
\hline
MINE essays (kyssen/kg-gen-evaluation-essays) & MINE &
A collection of knowledge-focused essays (multi-document). Each essay is relatively short and expository, often exhibiting clearer local coherence than narratives.
We preserve dataset order and concatenate essays to form $S$. & \citep{mo2025kggen}. \\
\hline
Wikipedia (English) (google/wiki40b) & Wikipedia &
Encyclopedic articles spanning broad topics; expository style with frequent entity/topic changes across documents.
We use the first 5{,}000 documents in dataset order. & \citep{guo2020wiki}. \\
\hline
\end{tabular}
\label{tab:corpus_compare}
\end{table*}

\subsection{Preprocessing and sentence sequence construction}
\label{sec:appendix_preprocess}

In all corpora, we convert the raw text into an ordered sentence sequence $S$ using the same pipeline.

\paragraph{Text cleaning.}
Given a raw text string, we apply a lightweight cleaning function that:
(i) strips noisy characters at both ends while deliberately preserving sentence-final punctuation to avoid breaking sentence boundary detection; and
(ii) merges repeated whitespace into a single space.

\paragraph{Sentence segmentation.}
We split each document into sentences via nltk.tokenize.sent\_tokenize, which is based on the Punkt sentence tokenizer \citep{kiss2006punkt}. After splitting, empty sentences are removed, and each sentence is re-cleaned. This yields a list of sentences for each document.

\paragraph{Preserving order and forming $S$.}
For each corpus, we preserve the original document order, and, within each document, preserve the original sentence order. We then concatenate all sentence lists into one global ordered sequence
$S=(s_1,\dots,s_n)$.
For corpora that naturally consist of many documents (ArXiv, Wikipedia, MINE), this produces a long sequence whose local neighborhoods reflect within-document coherence, while global transitions reflect the dataset's document ordering.

\section{Comparing Mean Pairwise Distance (MPD) Across Corpora and Embedding Models}
\label{sec:appendix-mpd-table}

\begin{table*}[htb]
\centering
\small
\renewcommand{\arraystretch}{1.15}
\caption{Mean pairwise cosine distance (MPD) of sentence embeddings across corpora and embedding models. Larger MPD indicates more dispersed sentence embeddings; smaller MPD indicates stronger concentration. The last column reports the average MPD across models for each corpus, and the last row reports averages across corpora for each model.}
\begin{tabular}{lcccccc}
\toprule
\textbf{Corpus} &
\textbf{bge-large} &
\textbf{e5-large} &
\textbf{all-mpnet} &
\textbf{gte-large} &
\textbf{text-embedding} &
\textbf{Avg.} \\
\midrule
ArXiv                        & 0.505 & 0.231$\downarrow$ & 0.865$\uparrow$ & 0.249 & 0.789 & 0.528 \\
Alice's Adventures           & 0.582 & 0.232 & 0.747$\uparrow$ & 0.202$\downarrow$ & 0.697 & 0.492 \\
Pride and Prejudice          & 0.577 & 0.240 & 0.774$\uparrow$ & 0.209$\downarrow$ & 0.718 & 0.504 \\
MINE                         & 0.575 & 0.253$\downarrow$ & 0.903$\uparrow$ & 0.256 & 0.882 & 0.574 \\
Wikipedia                    & 0.641 & 0.293 & 0.897 & 0.261$\downarrow$ & 0.900$\uparrow$ & 0.598 \\
\midrule
Avg. (across corpora)        & 0.576 & 0.250 & 0.837 & 0.235 & 0.797 & 0.539 \\
\bottomrule
\end{tabular}
\label{tab:mpd-corpus-model}
\end{table*}

This section complements the simplified illustration in
Figure~\ref{fig:mpd_across_models} (main paper) by reporting the
corpus-level MPD statistics for all models and corpora used throughout
the paper, and by providing a more careful interpretation of what MPD does---and
does not---reveal about embedding geometry and downstream retrieval.

\subsection{Metric and computation protocol}

Given a corpus-specific ordered sentence sequence $S=(s_1,\dots,s_n)$ constructed by the preprocessing pipeline in Section~\ref{sec:appendix_preprocess}, we embed each sentence $s_i$ using an embedding model (Section~\ref{sec:appendix_models}) to obtain unit-normalized sentence embeddings $\{e_i\}_{i=1}^n$.
We then compute the mean pairwise cosine distance (MPD):
\begin{equation}
\label{eq:mpd_global}
\mathrm{MPD}(S)
=\frac{2}{n(n-1)}
\sum_{1\le i<j\le n}\bigl(1-\cos(e_i,e_j)\bigr),
\end{equation}
where a smaller MPD indicates a more concentrated (more anisotropic) embedding distribution, while a larger MPD indicates more dispersed sentence embeddings.

The main paper (Figure~\ref{fig:mpd_across_models}) plots an incremental version of this statistic by computing MPD over the first $1,2,\dots,n$ sentences. The plateau observed there motivates a practical summary statistic: the converged MPD value when $n$ is sufficiently large.
Table~\ref{tab:mpd-corpus-model} reports this corpus-level MPD computed on all sentences in each corpus after preprocessing. For large multi-document corpora (ArXiv and Wikipedia), we follow Section~\ref{sec:appendix_corpora} and restrict to the first 5000 documents for efficiency and reproducibility.

\subsection{Results: model dependence vs.\ corpus dependence}

Table~\ref{tab:mpd-corpus-model} shows that MPD varies substantially across both models and corpora, but qualitatively different ways.

\paragraph{Model dependence dominates the absolute MPD scale.}
Keeping the corpus fixed, different embedding models can yield dramatically different MPD values. For example, on ArXiv, MPD ranges from $0.231$ (e5-large) to
$0.865$ (all-mpnet), a gap of $\approx0.634$. Similar gaps appear on Wikipedia (from $0.261$ to $0.900$, gap $\approx0.639$) and MINE (from $0.253$ to $0.903$, gap $\approx0.650$). This agrees with Figure~\ref{fig:mpd_across_models}: even when the MPD curves stabilize as $n$ increases, their converged levels are strongly
model-dependent.

A consistent ranking also emerges in the averaged row of Table~\ref{tab:mpd-corpus-model}: gte-large and e5-large tend to produce the most concentrated sentence embeddings (lowest MPD), bge-large is intermediate, while text-embedding and all-mpnet are substantially more dispersed (higher MPD).

\paragraph{Corpus dependence reflects discourse regime, but with smaller range.}
Keeping the model fixed, MPD still varies across corpora, indicating that sentence-level semantic diversity differs by domain. However, the range within the model is typically much smaller than the range across the model. For example, under bge-large, MPD ranges from $0.505$ (ArXiv) to $0.641$ (Wikipedia), a spread of $0.136$; under e5-large, the spread is $0.062$ (from $0.231$ to $0.293$); under gte-large, the spread is $0.059$ (from $0.202$ to $0.261$).
This pattern suggests that while corpus semantics shape dispersion, the global geometry induced by the embedding model largely determines the overall MPD scale.

At the corpus level, the last column of Table~\ref{tab:mpd-corpus-model} indicates that Wikipedia and MINE have a higher average MPD than the two novels, which is consistent with their broader topical coverage and frequent changes between documents. In contrast, long-form narratives (Alice, Pride) tend to maintain stronger global continuity and recurring entities/themes, which typically reduces global dispersion.

\subsection{Discussion: what MPD can (and cannot) explain}

\paragraph{MPD is a geometry descriptor, not a performance predictor.}
MPD (and related anisotropy/concentration measures) summarizes the global spread of embeddings, but it does not directly determine retrieval quality. This is consistent with the paradox highlighted in the main paper: models with very different MPD (e.g., e5-large vs. all-mpnet) can still achieve broadly comparable performance on practical downstream tasks. In other words, the absolute concentration level alone is insufficient to explain when embedding-based retrieval becomes difficult.

\paragraph{Why can different models yield drastically different MPD?}
The strong model dependence of MPD suggests that it is not merely a property of the corpus. Different training objectives, data mixtures, embedding dimensions, pooling implementations, and normalization conventions can induce different global angular distributions (i.e. different degrees of anisotropy) even on identical inputs. Therefore, comparing MPD values across models mainly reveals differences in embedding-space geometry, not necessarily differences in semantic fidelity.

\paragraph{Implication for our paper.}
Taken together, Table~\ref{tab:mpd-corpus-model} and Figure~\ref{fig:mpd_across_models} motivate the central question of this paper: if global concentration statistics can vary widely across models and yet do not consistently predict downstream behavior, what content-driven factor explains when embeddings become less discriminative? This motivates our semantic shift perspective in the main paper: instead of treating concentration as the root cause, \textbf{we examine how structured semantic evolution within text (semantic shift) interacts with pooling/smoothing mechanisms to produce collapse and retrieval degradation.}

Table~\ref{tab:mpd-corpus-model} should be read as a diagnostic snapshot of the embedding geometry induced by each model in each discourse regime. Its main message is not that "low MPD is bad" or "high MPD is good", but that MPD is strongly model-dependent and therefore cannot by itself serve as a universal explanation of retrieval difficulty. This observation sets the stage for the controlled semantic-shift experiments analyzed in subsequent sections.

\section{Further Analysis of Transformer-Based Embedding Models and Extended Experiments on Theorem~\ref{thm:semantic-dilution}}
\label{app:theorem1-empirical}

This appendix extends the empirical validation of Theorem~\ref{thm:semantic-dilution} to five embedding models and five corpora (summarized in Section~\ref{sec:appendix_models_corpora}).
Our goal is to test the central claim under a realistic encoding pipeline: \textbf{even when a multi-sentence text is encoded directly by a Transformer
encoder (instead of being explicitly averaged over sentence embeddings), sentence-level semantic diversity still monotonically increases the discrepancy
between the text embedding and its constituent sentence embeddings.}

\begin{figure*}[htb!]
  \centering
  \includegraphics[width=\linewidth]{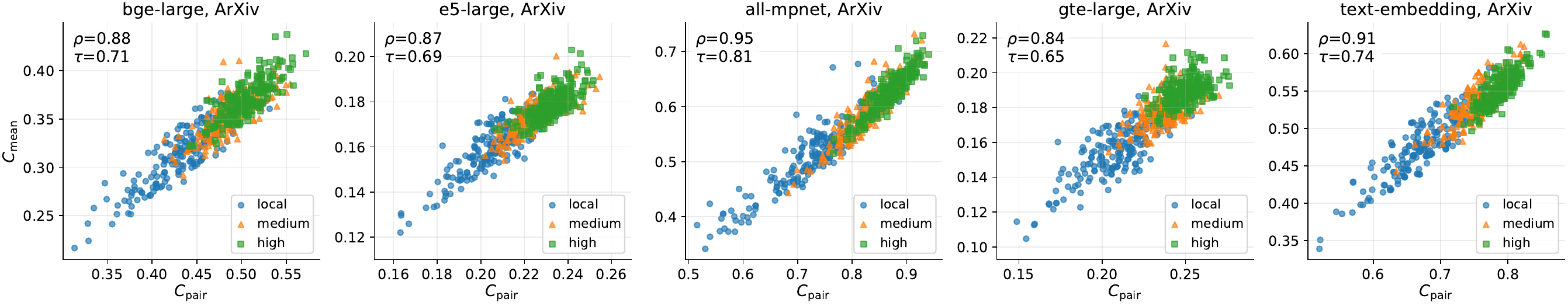}
  \includegraphics[width=\linewidth]{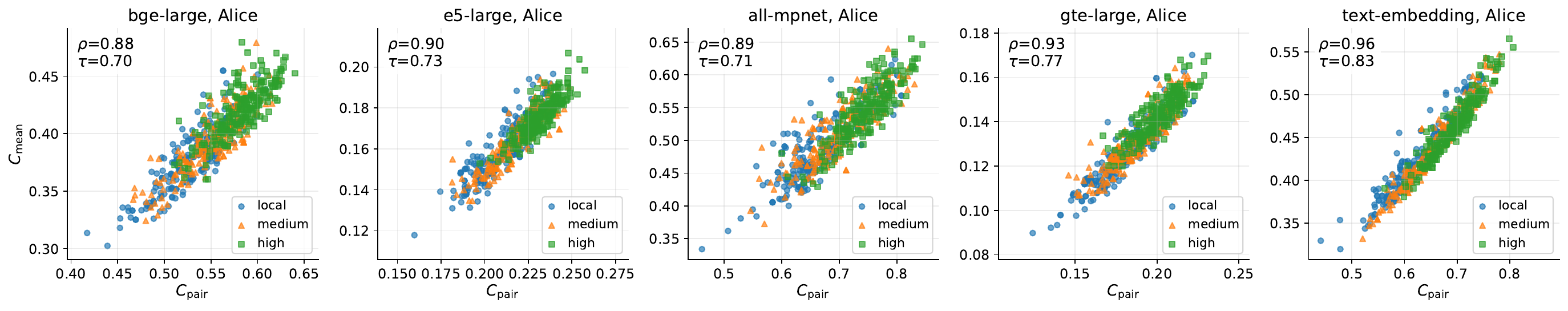}
  \includegraphics[width=\linewidth]{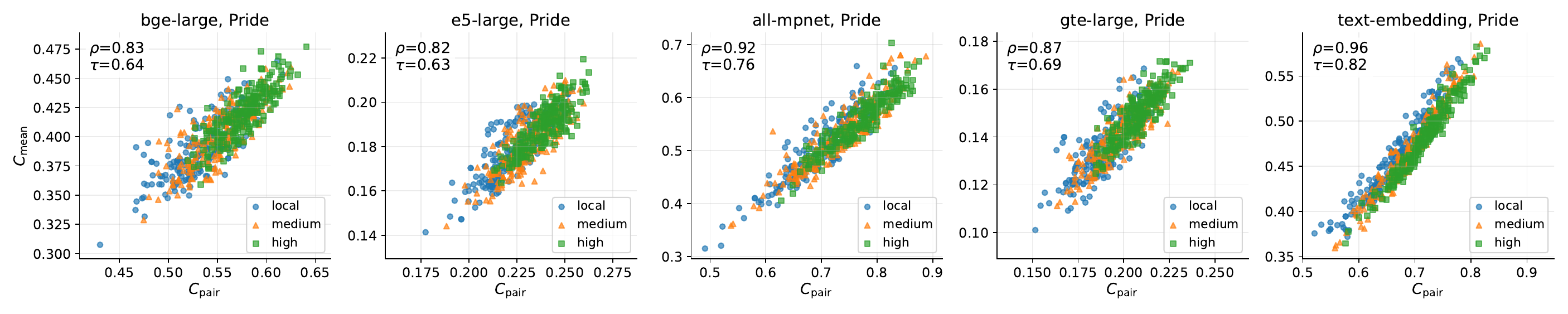}
  \includegraphics[width=\linewidth]{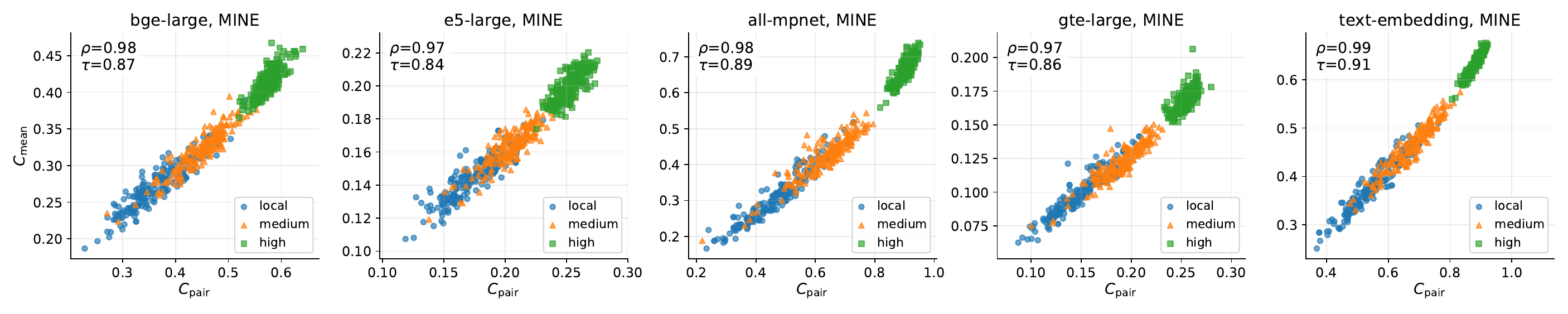}
  \includegraphics[width=\linewidth]{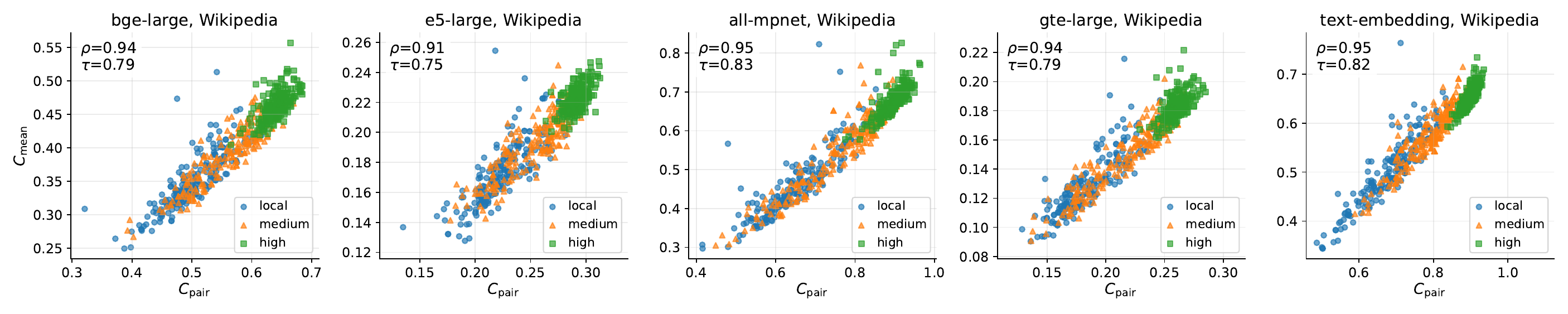}
  \caption{Empirical verification of Theorem~\ref{thm:semantic-dilution} across five corpora and five embedding models. Each subplot shows $C_{\mathrm{mean}}$ (text--sentence discrepancy) versus $C_{\mathrm{pair}}$ (sentence-level semantic diversity) under three controlled diversity regimes: \textit{local}, \textit{medium}, and \textit{high}. Rank correlations (Spearman's $\rho$, Kendall's $\tau$) are reported in each subplot.}
  \label{fig:models_corpora_Cmean_Cpair}
\end{figure*}

\begin{table}[htb]
\centering
\small
\setlength{\tabcolsep}{3pt}
\caption{Extended empirical validation of Theorem~\ref{thm:semantic-dilution} across five corpora and five embedding models. We report Spearman's $\rho$ and Kendall's $\tau$ between $C_{\mathrm{pair}}$ and $C_{\mathrm{mean}}$. High values across all settings indicate a robust monotonic relationship.}
\begin{tabular}{lccccc}
\toprule
\textbf{Corpus} &
\textbf{bge-large} &
\textbf{e5-large} &
\textbf{all-mpnet} &
\textbf{gte-large} &
\textbf{text-embedding} \\
\midrule
ArXiv &
$\rho{=}0.88,\ \tau{=}0.71$ &
$\rho{=}0.87,\ \tau{=}0.69$ &
$\rho{=}0.95,\ \tau{=}0.81$ &
$\rho{=}0.84,\ \tau{=}0.65$ &
$\rho{=}0.91,\ \tau{=}0.74$ \\
Alice &
$\rho{=}0.88,\ \tau{=}0.70$ &
$\rho{=}0.90,\ \tau{=}0.73$ &
$\rho{=}0.89,\ \tau{=}0.71$ &
$\rho{=}0.93,\ \tau{=}0.77$ &
$\rho{=}0.96,\ \tau{=}0.83$ \\
Pride &
$\rho{=}0.83,\ \tau{=}0.64$ &
$\rho{=}0.82,\ \tau{=}0.63$ &
$\rho{=}0.92,\ \tau{=}0.76$ &
$\rho{=}0.87,\ \tau{=}0.69$ &
$\rho{=}0.96,\ \tau{=}0.82$ \\
MINE &
$\rho{=}0.98,\ \tau{=}0.87$ &
$\rho{=}0.97,\ \tau{=}0.84$ &
$\rho{=}0.98,\ \tau{=}0.89$ &
$\rho{=}0.97,\ \tau{=}0.86$ &
$\rho{=}0.99,\ \tau{=}0.91$ \\
Wikipedia &
$\rho{=}0.94,\ \tau{=}0.79$ &
$\rho{=}0.91,\ \tau{=}0.75$ &
$\rho{=}0.95,\ \tau{=}0.83$ &
$\rho{=}0.94,\ \tau{=}0.79$ &
$\rho{=}0.95,\ \tau{=}0.82$ \\
\bottomrule
\end{tabular}
\label{tab:theorem1_extended_corr}
\end{table}

\subsection{Protocol: controlling sentence-level semantic diversity}

We follow the unified preprocessing pipeline in Section~\ref{sec:appendix_models_corpora} to convert each corpus into an ordered sentence sequence. We then fix the group size to $k{=}10$ and construct sentence groups under three controlled diversity regimes: \textit{local} (consecutive sentences within a document), \textit{medium} (non-adjacent sentences within a document) and \textit{high} (sentences sampled uniformly from the corpus).
This design varies sentence-level semantic diversity while holding $k$ fixed, allowing a direct test of the monotonicity predicted by Theorem~\ref{thm:semantic-dilution} beyond idealized pooling.

\subsection{Metrics and evaluation}

For each sampled group $(s_1,\dots,s_k)$, we compute sentence embeddings $(e_1,\dots,e_k)$ by encoding each sentence separately, and compute a text embedding $z$ by concatenating the $k$ sentences (with standard separators) and encoding the resulting multi-sentence text once. We then measure:
\[
\begin{aligned}
&C_{\mathrm{pair}} = \frac{2}{k(k-1)}\sum_{i<j}\left(1-\cos(e_i,e_j)\right),\\
&C_{\mathrm{mean}} = \frac{1}{k}\sum_{i=1}^k \left(1-\cos(e_i,z)\right).
\end{aligned}
\]
$C_{\mathrm{pair}}$ quantifies sentence-level semantic diversity, while $C_{\mathrm{mean}}$ quantifies how much the encoded text representation deviates from its constituent sentences (semantic dilution).

To quantify monotonic dependence without assuming linearity, we report Spearman's rank correlation $\rho$ and Kendall's $\tau$ between $C_{\mathrm{pair}}$ and $C_{\mathrm{mean}}$ for each corpus--model pair.

\subsection{Results: Strong Cross-Model and Cross-Corpus Monotonicity}
Figure~\ref{fig:models_corpora_Cmean_Cpair} reports scatter plots of $C_{\mathrm{mean}}$ versus $C_{\mathrm{pair}}$ for each corpus, with points stratified by the three diversity regimes. Across all corpora and embedding models, we observe a clear monotonic trend: higher sentence-level semantic diversity ($C_{\mathrm{pair}}$) consistently yields larger text--sentence discrepancy ($C_{\mathrm{mean}}$), matching the qualitative behavior predicted by Theorem~\ref{thm:semantic-dilution}.

To quantify monotonic dependence without assuming linearity, we compute Spearman's $\rho$ and Kendall's $\tau$ for each corpus-model pair. The results are summarized in Table~\ref{tab:theorem1_extended_corr}. Correlations are uniformly high: \emph{Spearman $\rho$ ranges from $0.82$ to $0.99$ and Kendall $\tau$ ranges from $0.63$ to $0.91$ across all settings.} Notably, the knowledge-oriented MINE corpus exhibits near-saturated correlations across all models (e.g., $\rho \ge 0.97$), while long-form narratives (Alice / Pride) remain strongly monotonic but slightly noisier—consistent with the fact that narrative texts contain richer discourse phenomena (e.g., gradual topic drift, character/event re-entrance) that can introduce additional variability in embedding behavior.

\paragraph{Why scale differences do not affect the conclusion.}
We emphasize that the absolute magnitudes of $C_{\mathrm{pair}}$ and $C_{\mathrm{mean}}$ can vary substantially across models due to differences in embedding-space geometry (e.g., global angular concentration, normalization conventions, and training objectives). This is precisely why we report rank-based statistics (Spearman/Kendall): they are invariant
to monotone rescaling and directly test the theoretical prediction of monotonicity. Therefore, model-dependent scale differences do not change the conclusion that semantic diversity reliably drives semantic dilution under practical Transformer encoders.

In general, the extended results in Figure~\ref{fig:models_corpora_Cmean_Cpair} and Table~\ref{tab:theorem1_extended_corr} confirm that Theorem~\ref{thm:semantic-dilution} captures a robust property of real embedding models and diverse corpora, rather than a peculiarity of a specific architecture, dataset, or idealized pooling assumption. Sentence-level semantic diversity consistently induces larger text--sentence discrepancy even under direct encoding of concatenated text, providing a solid empirical foundation for the semantic shift perspective developed in the main paper.

\section{Additional Results: Semantic Shift vs. Length Collapse Across Embedding Models}
\label{sec:appendix_shift_across_models}

This appendix complements Sec.~\ref{subsec:shift-vs-concentration} by reporting results on two additional embedding models (e5-large and all-mpnet) beyond the main-embedding model (bge-large). Across all three models and both corpora (ArXiv and Alice's Adventures in Wonderland), we observe highly consistent qualitative patterns: (i) embedding concentration (measured by MPD) strengthens as the constructed text units become longer, but the magnitude of this effect depends primarily on how strongly semantics are mixed within each unit; and (ii) our semantic-shift metric measured on the same constructed units tracks the severity of MPD reduction almost monotonically. These results support the claim that semantic shift is a model-robust explanatory variable for when length collapse and anisotropy become severe.

\begin{wrapfigure}{r}{0.5\textwidth}
  \centering
  \vspace{-15pt}
  \includegraphics[width=0.5\textwidth]{bge_large_mpd.pdf}
  \includegraphics[width=0.5\textwidth]{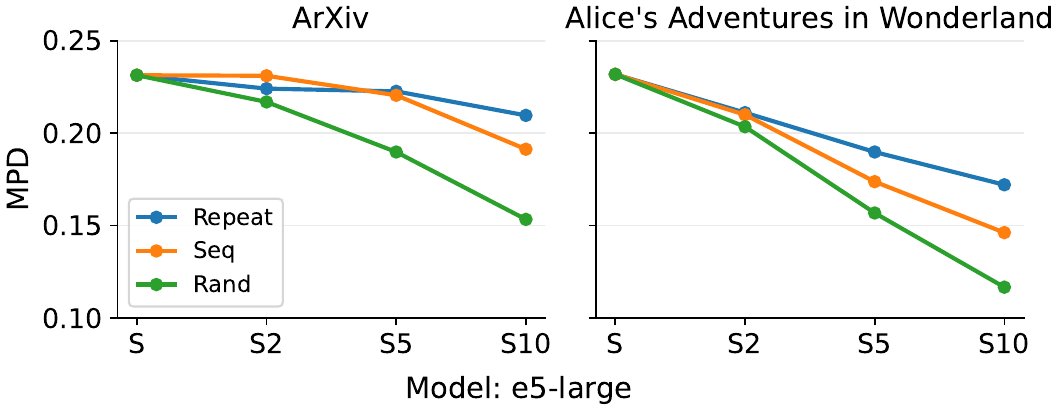}
  \includegraphics[width=0.5\textwidth]{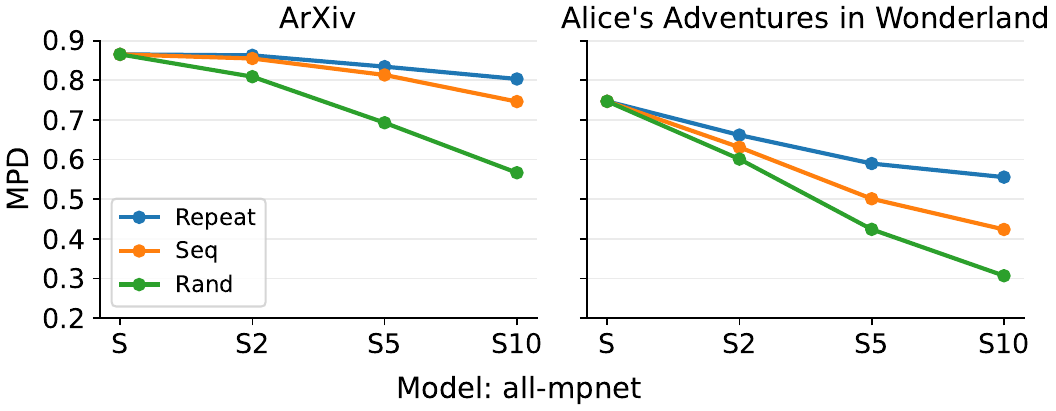}
  \vspace{-10pt}
  \caption{Embedding concentration measured by MPD under different concatenation patterns on ArXiv and \textit{Alice's Adventures in Wonderland}. Lower MPD indicates stronger concentration (i.e., more severe length collapse / anisotropy).}
  \label{fig:models_corpora_mpd}
  \vspace{-20pt}
\end{wrapfigure}

\paragraph{Figures.}
For each model, we report (1) MPD under the three concatenation patterns (repeat, sequential, random) for $S$, $S2$, $S5$, and $S10$, and (2) mean semantic shift measured on the $S10$ variant across hop distances $1\ldots 9$.
Figures~\ref{fig:models_corpora_mpd} and~\ref{fig:models_corpora_shift} summarize the complete set of results.

\paragraph{(1) MPD results are consistent across models.}
Across all models, MPD decreases from $S \rightarrow S10$ in all concatenation patterns (Fig.~\ref{fig:models_corpora_mpd}), confirming that lengthening tends to increase concentration. However, the rate and extent of MPD reduction depend strongly on how semantics are composed within each constructed unit:
\emph{repeat} produces the mildest MPD drop, \emph{sequential} produces a noticeably larger drop, and \emph{random} produces the sharpest decline, indicating the strongest concentration. This ordering (Repeat $<$ Seq $<$ Rand in collapse severity) holds on both corpora for all models (Figs.~\ref{fig:models_corpora_mpd}).

In addition, the absolute MPD level is clearly model-dependent. e5-large operates in a substantially more concentrated regime overall (lower MPD throughout), while all-mpnet is the least concentrated (higher MPD), and bge-large lies in between. This reproduces a familiar empirical fact: different embedding families can exhibit markedly different global geometry (e.g., anisotropy), even when their downstream performance is broadly comparable.
However, critically, these baseline differences do not change the central pattern: \emph{semantic diversity consistently amplifies concentration far more than pure lengthening via repetition}.

\paragraph{(2) Semantic shift curves show the same ranking across models.}
Figure~\ref{fig:models_corpora_shift} reports the mean semantic shift in the variant $S10$ across hop distances. Three robust patterns emerge across all models and both corpora:
(i) Repeat yields zero shift across hop distances, as expected because the constructed units preserve the same sentence semantics and only increase length;
(ii) Sequential and random shifts increase with hop distance, indicating that semantic divergence accumulates progressively as we move farther along the sequence; and
(iii) Random tends to exceed sequential most clearly on ArXiv, reflecting stronger semantic heterogeneity induced by global mixing
(Figs.~\ref{fig:models_corpora_shift}).

For Alice's Adventures in Wonderland, the gap between random and sequential becomes smaller than that on ArXiv across all models (Fig.~\ref{fig:models_corpora_shift}). This is consistent with the narrative nature of the corpus: even local sequential windows naturally include topic transitions and plot development, so sequential concatenation already induces non-trivial within-unit semantic evolution, partially closing the gap to random mixing.

\paragraph{(3) Semantic shift explains MPD reduction better than length alone.}
Comparing Figs.~\ref{fig:models_corpora_mpd} and~\ref{fig:models_corpora_shift} reveals a consistent alignment:
patterns with larger measured semantic shift (Seq/Rand) also produce stronger MPD reductions, while repeat concatenation produces zero semantic shift and only mild concentration. Importantly, this alignment persists across:
\textbf{(a)} embedding models with very different baseline geometry (overall MPD levels), and
\textbf{(b)} corpus types with different discourse properties (technical articles vs. long-form narrative).
Therefore, the expanded results strengthen the main-text conclusion:
\emph{the severity of length collapse / anisotropy is primarily controlled by the strength of within-unit semantic shift, rather than by length itself.}

\paragraph{Cross-model invariance.}
A particularly salient observation is that the e5-large model is globally more anisotropic (lower MPD across all settings), which could prima facie suggest that "anisotropy alone" should predict retrieval difficulty.
However, across bge-large, e5-large, and all-mpnet, we consistently observe the same monotonic relationship:
\textbf{larger semantic shift $\Rightarrow$ larger MPD reduction (stronger collapse)}.
In other words, even when a model starts from a more concentrated geometry, \emph{semantic shift still governs how rapidly embeddings further collapse as we inject semantic heterogeneity}. This invariance indicates that semantic shift is not a model-specific artifact; rather, it functions as a stable explanatory variable that generalizes across embedding families with different training objectives and baseline anisotropy.

\begin{wrapfigure}{r}{0.5\textwidth}
  \centering
  \vspace{-15pt}
  \includegraphics[width=0.5\textwidth]{bge_large_shift.pdf}
  \includegraphics[width=0.5\textwidth]{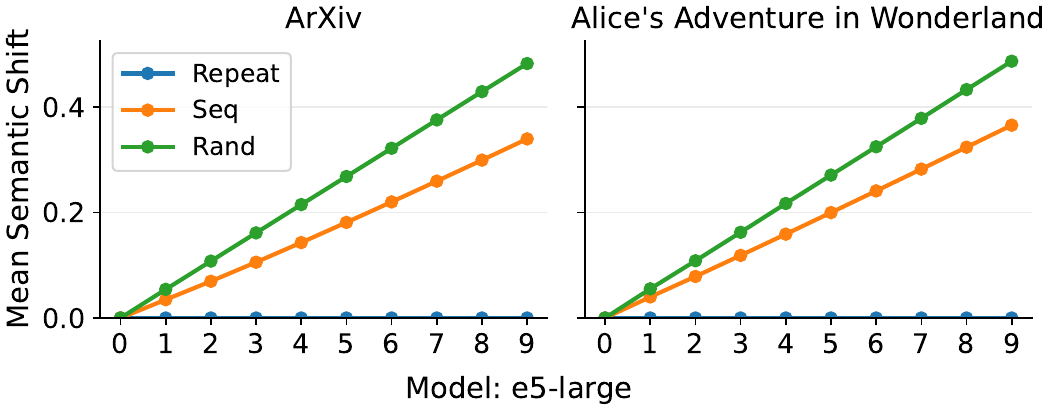}
  \includegraphics[width=0.5\textwidth]{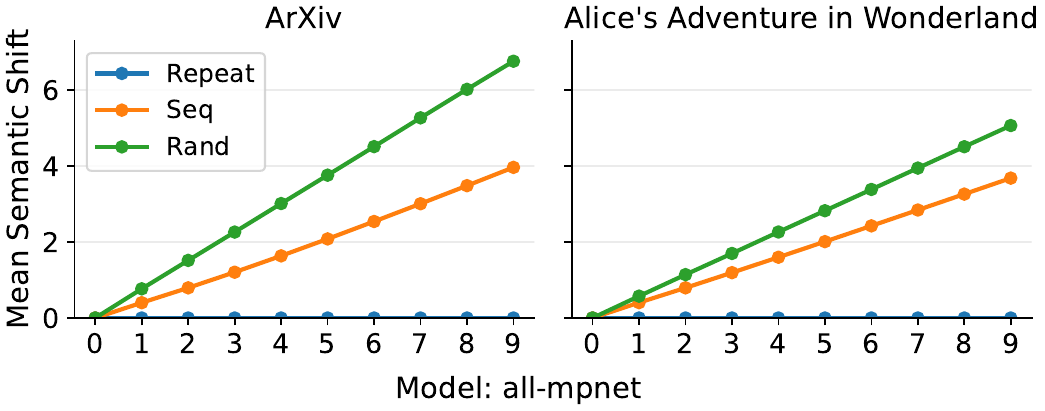}
  \vspace{-10pt}
  \caption{Mean semantic shift on the $S10$ variant as a function of hop distance under repeat/sequential/random concatenation. Across models, repeat yields zero shift, sequential yields moderate shift, and random yields the largest shift.}
  \label{fig:models_corpora_shift}
  \vspace{-20pt}
\end{wrapfigure}

\paragraph{Observed scaling differences.}
While the ranking of semantic shift is consistent, the absolute scale of the shift values can differ across models (Fig.~\ref{fig:models_corpora_shift}). This is expected, because our semantic shift metric is computed from cosine-based distances between embeddings, and different encoders induce different global angular distributions due to architectural and training choices (e.g., normalization conventions, contrastive temperature/regularization, and how aggressively the representation space is "compressed" around dominant directions).
As a result, the same underlying semantic transition in text may correspond to a larger or smaller cosine-distance change depending on the model's intrinsic geometry. Crucially, our claims in this section rely on \emph{within-model, across-pattern comparisons} (Repeat vs. Seq vs. Rand on the same corpus), where the metric is applied under a fixed encoder. Under this controlled setting, the relative ordering and monotonic alignment between shift and MPD reduction remains stable, making the conclusion robust to cross-model scale differences.

\paragraph{Implications.}
Taken together, these additional results clarify two points that are easy to conflate:
(1) baseline anisotropy (global MPD level) is model-dependent and does not by itself determine when embeddings become unreliable; and
(2) the incremental collapse induced by lengthening is strongly modulated by the degree of semantic mixing inside each unit, which is captured by semantic shift.
Therefore, semantic shift offers a more predictive lens for diagnosing when long-text embeddings will collapse (and when anisotropy is likely to translate into retrieval failures), beyond explanations that attribute collapse primarily to length alone.

\paragraph{Design implications and real-world correspondence.}
The three controlled concatenation patterns in Section~\ref{subsec:shift-vs-concentration} are not merely synthetic stress tests; they closely mirror how long "documents" arise in practice.
\textbf{Repeat concatenation} approximates long inputs with high redundancy (e.g., templated pages, boilerplate-heavy documents, repetitive logs, or duplicated passages), where length increases without introducing new semantic components.
\textbf{Sequential concatenation} resembles organically long documents (e.g. academic papers, books, or well-edited articles) in which content evolves through locally coherent discourse, introducing semantic change gradually.
In contrast, \textbf{random concatenation} serves as a proxy for heterogeneous aggregation commonly produced by real pipelines: concatenating multiple sources into a single context window (multi-page PDFs, scraped web pages with sidebars and unrelated blocks, forum threads, stitched meeting notes, or retrieval-augmented prompts that combine snippets from different topics).
These settings differ less in length than in within-unit semantic heterogeneity, precisely the factor captured by the semantic shift.

This mapping helps to clarify why "length alone" is an incomplete predictor of collapse and downstream degradation.
If length-induced embedding collapse were primarily a function of sequence length, then all long inputs of comparable length should degrade similarly.
Our results instead show that long inputs can be relatively benign when semantic shift is weak (Repeat; and sometimes Sequential on structured corpora), but can collapse severely when semantic shift is strong (Random; and Sequential on narrative texts with frequent topic transitions).
Therefore, the operational driver behind collapse in realistic workloads is often not just the token budget, but how many distinct semantic components are mixed into the same embedding unit and how fast these components evolve across the text.

From a system-design perspective, this suggests that mitigation strategies should be shift-aware rather than purely length-aware.
For example, chunking policies in retrieval or RAG pipelines are often tuned by length heuristics (fixed token windows or simple overlap).
Our findings imply that such heuristics can be suboptimal: they may unnecessarily split redundant but coherent spans (low shift) while failing to separate heterogeneous spans (high shift) that are most likely to collapse.
Instead, semantic-shift signals can be used to identify semantic boundary points where topic transitions accelerate, which are precisely the locations where aggregation is most harmful.
More broadly, semantic shift provides a principled diagnostic for long-text embedding reliability:
documents with a high shift within the unit should be decomposed, indexed, or retrieved at finer granularity, whereas low-shift documents can tolerate larger units without substantial collapse.
This perspective also reconciles why embeddings of identical length can exhibit widely different concentration behaviors (Fig.~\ref{fig:models_corpora_mpd}) and why anisotropy is not uniformly harmful across settings: it becomes most damaging when it is induced by strong semantic shift rather than by lengthening.

\vspace{8pt}
\section{Additional Retrieval Results Across Models and Corpora}
\label{sec:appendix_overlap_all_models}
\vspace{8pt}

This appendix extends Sec.~\ref{subsec:shift-vs-anisotropy} by reporting self-overlap@k results
for three embedding models (bge-large, e5-large, all-mpnet)
on two corpora (ArXiv and Alice's Adventures in Wonderland).
The main text shows only bge-large on ArXiv for brevity.
Here, we demonstrate that the key conclusion is invariant across models and corpora:
\emph{anisotropy becomes harmful primarily when induced by strong semantic shift (sequential/random mixing),
whereas anisotropy caused by lengthening (repeat) has limited impact on retrieval robustness.}

\paragraph{Figures.}
Figures~\ref{fig:bge_large_overlap}, \ref{fig:e5_large_overlap} and \ref{fig:all_mpnet_overlap} summarize the complete set of results.
Each subfigure reports average self-overlap@k ($k\in\{1,3,5\}$) between the retrieval of the original corpus $S$
and its concatenated variants ($S2$, $S5$, $S10$) in repeat/sequential/random patterns.

\begin{wrapfigure}{r}{0.5\textwidth}
  \centering
  \vspace{0pt}
  \includegraphics[width=0.5\textwidth]{bge_large_arxiv_overlap.pdf}
  \includegraphics[width=0.5\textwidth]{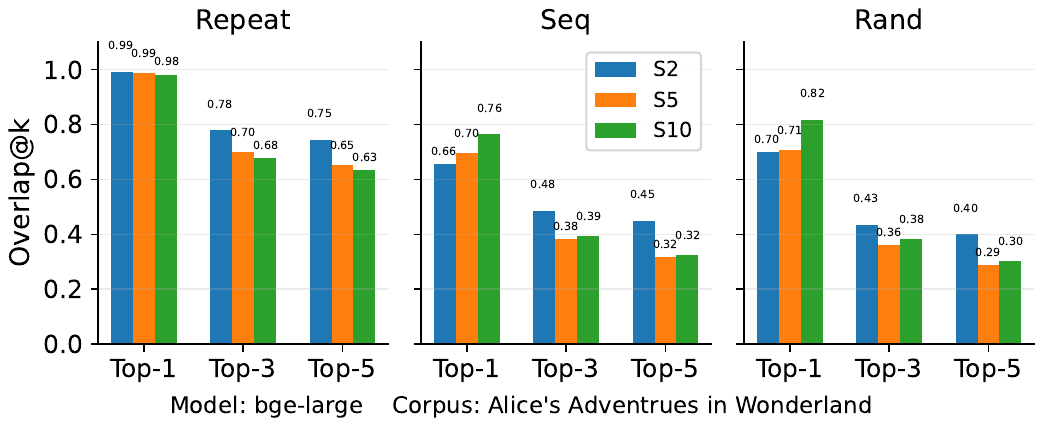}
  \vspace{-10pt}
  \caption{(Model: bge-large) Average self-overlap@k between retrieval results on the original corpus $S$ and its concatenated variants ($S2$, $S5$, $S10$) under repeat, sequential, and random patterns. Higher overlap indicates stronger semantic preservation and less retrieval damage.}
  \label{fig:bge_large_overlap}
  \vspace{-15pt}
\end{wrapfigure}

\paragraph{(1) Repeat concatenation: benign anisotropy with minimal retrieval damage.}
Across all three models and both corpora, the repeat pattern consistently yields the highest overlap and remains stable as we move from $S2$ to $S10$.
In particular, Overlap@1 is essentially preserved (typically $\approx 0.98$--$1.00$) in all settings,
and Overlap@3/5 also stays relatively high (often $\approx 0.7$--$0.8$).
This supports the main-text claim:
\emph{anisotropy/concentration induced mainly by lengthening (without semantic diversification) tends to preserve relative neighborhoods
and thus has limited impact on retrieval robustness.}

\paragraph{(2) Sequential concatenation: retrieval degrades as the semantic window expands.}
Under sequential concatenation, overlap decreases substantially and typically worsens with longer windows ($S2 \rightarrow S10$), especially for larger $k$.
On ArXiv, the trend is clear across models:
for bge-large, Overlap@1 drops to roughly $0.72 \rightarrow 0.42$ from $S2$ to $S10$, and Overlap@5 drops to roughly $0.50 \rightarrow 0.28$ (Fig.~\ref{fig:bge_large_overlap}); for e5-large, the degradation is even sharper on larger-$k$ neighborhoods (e.g., Overlap@5 around $0.55 \rightarrow 0.21$; Fig.~\ref{fig:e5_large_overlap}); and all-mpnet exhibits a similar monotone deterioration (Fig.~\ref{fig:all_mpnet_overlap}).
On Alice, sequential concatenation remains harmful across all models as well
(Figs.~\ref{fig:bge_large_overlap},~\ref{fig:e5_large_overlap},~\ref{fig:all_mpnet_overlap}), consistent with the fact that narrative discourse can accumulate semantic change even within locally adjacent spans, so enlarging the sequential window injects increasing within-unit semantic variation.

\begin{wrapfigure}{r}{0.5\textwidth}
  \centering
  \vspace{0pt}
  \includegraphics[width=0.5\textwidth]{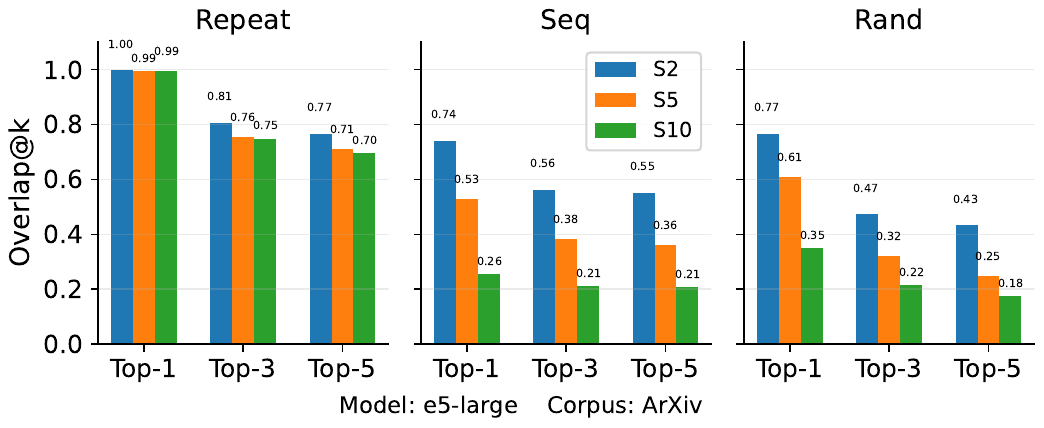}
  \includegraphics[width=0.5\textwidth]{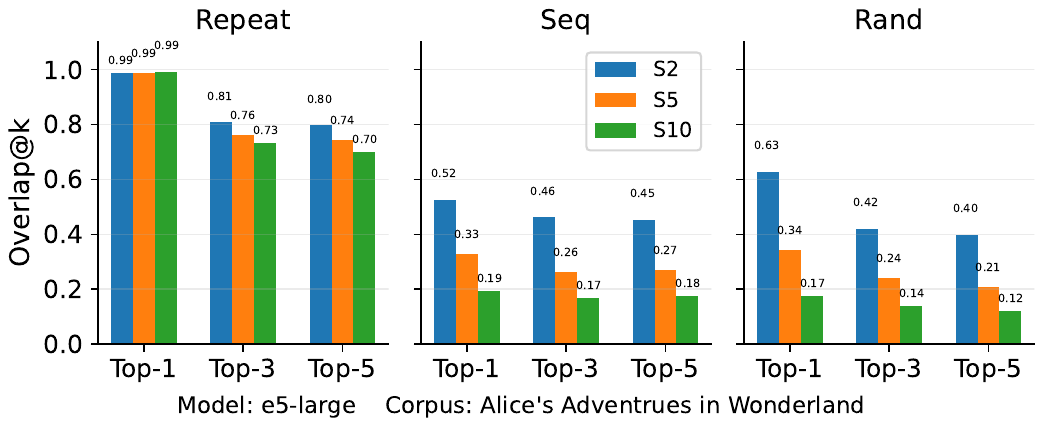}
  \vspace{-10pt}
  \caption{(Model: e5-large) Average self-overlap@k between retrieval results on the original corpus $S$ and its concatenated variants ($S2$, $S5$, $S10$) under repeat, sequential, and random patterns. Higher overlap indicates stronger semantic preservation and less retrieval damage.}
  \label{fig:e5_large_overlap}
  \vspace{-15pt}
\end{wrapfigure}

\paragraph{(3) Random concatenation: strongest retrieval damage.}
Random concatenation consistently yields the lowest overlap@k and the fastest degradation with window size. On ArXiv, bge-large shows Overlap@1 decreasing from about $0.72$ (S2) to $\approx 0.57$ (S10), and Overlap@3/5 similarly collapsing (Fig.~\ref{fig:bge_large_overlap}); e5-large shows substantial drops especially for larger $k$ (e.g., Overlap@5 reaching $\approx 0.18$ on S10; Fig.~\ref{fig:e5_large_overlap}); and all-mpnet follows the same pattern (Fig.~\ref{fig:all_mpnet_overlap}). On Alice, random remains highly damaging across models, with \texttt{e5-large} showing particularly low overlap for larger-$k$ neighborhoods (Fig.~\ref{fig:e5_large_overlap}). Overall, random mixing, which maximizes within-unit semantic heterogeneity, produces the most severe loss of neighborhood preservation, aligning with our thesis that retrieval failure is driven by \emph{semantic shift} rather than concentration alone.

\paragraph{Cross-model invariance.}
Although models differ in baseline anisotropy (e.g., e5-large is often more concentrated globally than all-mpnet),  Figures~\ref{fig:bge_large_overlap}, \ref{fig:e5_large_overlap}, and \ref{fig:all_mpnet_overlap} show stronger and more general regularity. \textbf{Across all three models, the ordering of retrieval robustness is consistent:}
\[
\text{Repeat} > \text{Sequential} > \text{Random}.
\]
That is, even when a model starts from a more anisotropic embedding space, what determines retrieval degradation under lengthening is how semantic content is mixed within each embedded unit. This invariance mirrors our concentration results (MPD/shift) and supports semantic shift as a model-agnostic driver of when anisotropy becomes harmful.

\begin{wrapfigure}{r}{0.5\textwidth}
  \centering
  \vspace{0pt}
  \includegraphics[width=0.5\textwidth]{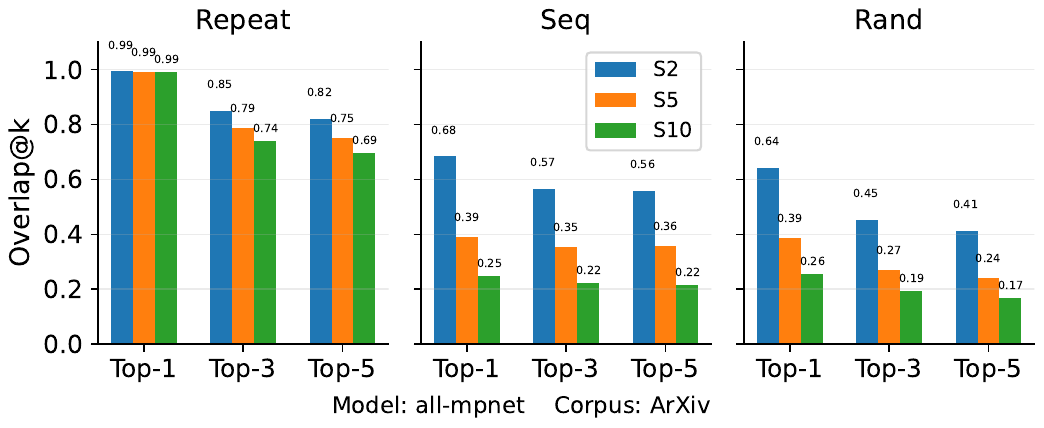}
  \includegraphics[width=0.5\textwidth]{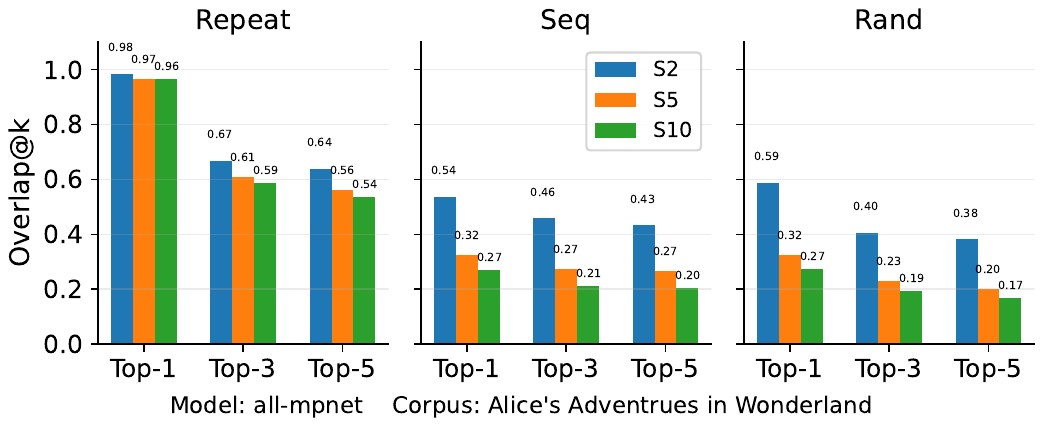}
  \vspace{-10pt}
  \caption{(Model: all-mpnet) Average self-overlap@k between retrieval results on the original corpus $S$ and its concatenated variants ($S2$, $S5$, $S10$) under repeat, sequential, and random patterns. Higher overlap indicates stronger semantic preservation and less retrieval damage.}
  \label{fig:all_mpnet_overlap}
  \vspace{-10pt}
\end{wrapfigure}

Across all embedding models and all corpora, the retrieval experiments consistently support:
(i) length-only induced concentration (repeat) is largely benign for retrieval robustness; (ii) shift-inducing transformations (sequential/random) substantially disrupt nearest-neighbor rankings; and (iii) the strength of retrieval degradation correlates with how much semantic shift is injected within each unit, providing a principled explanation for why anisotropy is not uniformly harmful and when it becomes problematic.

\section{Semantic Shift Splitter: From Analysis to a Practical Segmenter}
\label{app:ss_splitter}

\subsection{Motivation: Turning Semantic Shift into a Segmentation Signal}
\label{app:ss_splitter:motivation}
The main paper characterizes the semantic shift as a systematic shift of embedding representations as text grows, driven jointly by (i) local semantic transitions between adjacent units and (ii) the global dispersion among all units within the same context.
Beyond serving as a diagnostic lens for embedding-based retrieval, this shift signal can be directly operationalized for sentence-level segmentation.

Chunking is a core primitive in retrieval-augmented generation (RAG): an effective chunker should (a) place boundaries near meaningful topic/section transitions and (b) produce chunks with controllable granularity and stable size, since excessive size variance can lead to a mixture of tiny fragments (weak evidence) and oversized passages (token-inefficient and harder to rank).
These requirements motivate a Semantic Shift Splitter, which forms a chunk online and cuts precisely when the accumulated shift of the current segment indicates that continuing would create a semantically unstable (or internally dispersed) chunk.

\subsection{Principle: Semantic Shift within a Candidate Segment}
\label{app:ss_splitter:principle}
We segment a document at the sentence level.
Let a document be a sequence of sentences $\{s_1,\dots,s_n\}$ and let $e_i$ denote the embedding of $s_i$ (we use bge-large in all experiments).
For a candidate segment containing $k$ ordered sentences, we reuse the semantic-shift definition from Definition~\ref{def:semantic-shift}:
\[
\mathrm{Shift}(k) \;=\; \mathrm{Local}(k)\cdot \mathrm{Disp}(k).
\]
Here, $\mathrm{Local}(k)$ measures ordered stepwise shift across adjacent sentences, while $\mathrm{Disp}(k)$ measures global semantic spread among all sentences within the segment.
Their product amplifies segments that are simultaneously (i) shifting along the reading order and (ii) internally dispersed, matching the instability patterns highlighted in the main paper.
This makes $\mathrm{Shift}(\cdot)$ a natural boundary signal: when adding the next sentence sharply increases shift, the current segment is likely crossing a semantic transition.

\subsection{Algorithm: Shift-Aware Online Chunking with Adaptive Threshold}
\label{app:ss_splitter:algorithm}
The splitter constructs chunks left-to-right.
Starting from an empty chunk, it appends sentences one by one.
Before appending sentence $s_i$, it evaluates the hypothetical shift $\mathrm{Shift}(|C|+1)$; if this value exceeds a threshold $\tau$, it cuts before appending and starts a new chunk at $s_i$.
We also enforce a hard token cap to avoid overly long chunks, which is essential in RAG. See Algorithm~\ref{alg:ss_splitter} for algorithm details.

\paragraph{Adaptive threshold estimation.}
We estimate $\tau$ per document rather than fixing it globally.
For each position $t$, we compute the shift of a local window of embeddings with radius $b$, producing window-shift values $\{\hat{s}_t\}_{t=1}^{n}$, and set
\[
\tau \;=\; \mathrm{Percentile}\bigl(\{\hat{s}_t\}_{t=1}^{n},\, p\bigr),
\]
where $p$ (shift\_percentile) controls how aggressively we cut: smaller $p$ yields a smaller $\tau$ and thus more boundaries.
This document-adaptive threshold makes the splitter robust to differences in writing style, length, and topical density.

\paragraph{Efficiency.}
A naive $\mathrm{Disp}(k)$ computation is $O(k^2)$, but the online construction admits incremental updates: when adding a new sentence, we only compute similarities between the new embedding and the embeddings already in the current chunk, yielding $O(k)$ per step.
In practice, $k$ is bounded by both the shift threshold and the token cap, which makes the method efficient for typical chunk sizes.

\begin{algorithm}[t]
\caption{Semantic Shift Splitter}
\label{alg:ss_splitter}
\begin{algorithmic}[1]
\REQUIRE sentences $\{s_i\}_{i=1}^n$, embeddings $\{e_i\}_{i=1}^n$, percentile $p$, token cap $T$, min sentences per chunk $m$
\STATE Estimate $\tau$ by window shifts: $\tau \leftarrow \mathrm{Percentile}(\{\hat{s}_t\}_{t=1}^n,p)$
\STATE Initialize empty current chunk $C \leftarrow [\;]$ and state for $\mathrm{Local},\mathrm{Disp}$
\FOR{$i=1$ to $n$}
    \IF{$|C|\ge 1$ \AND tokens$(C)+$tokens$(s_i) > T$}
        \STATE output $C$; reset $C \leftarrow [\;]$
    \ENDIF
    \IF{$|C|\ge m$}
        \STATE compute hypothetical $\mathrm{Shift}(|C|+1)$ if appending $s_i$
        \IF{$\mathrm{Shift}(|C|+1) > \tau$}
            \STATE output $C$; reset $C \leftarrow [\;]$
        \ENDIF
    \ENDIF
    \STATE append $s_i$ into $C$ and update state
\ENDFOR
\IF{$C \neq [\;]$} \STATE output $C$ \ENDIF
\end{algorithmic}
\end{algorithm}


\subsection{Experimental Setup and Fair Comparison Protocol}
\label{app:ss_splitter:exp}

\begin{table*}[t]
\centering
\small
\setlength{\tabcolsep}{5pt}
\caption{ArXiv paragraph-based segmentation: comparison of Fixed Splitter, Semantic Splitter, and Semantic Shift Splitters under matched granularity ($\approx$3/5/7 sentences per chunk). Higher is better for P/R/F1; lower is better for Pk and WindowDiff. Chunk statistics (avg\_sents/chunk, var\_sents/chunk) are reported in the last two columns.}
\begin{tabular}{l c ccccc cc}
\hline
\textbf{Granularity} & \textbf{Splitter} & \textbf{P} & \textbf{R} & \textbf{F1} & \textbf{Pk}$\downarrow$ & \textbf{WD}$\downarrow$ & \textbf{avg\_sents} & \textbf{var\_sents} \\
\hline
\multirow{3}{*}{\textbf{$\approx$3}}
 & Fixed   & 0.1998 & 0.3322 & 0.2495 & 0.5353 & 0.5410 & 2.998 & 0.004 \\
 & Semantic& 0.1684 & 0.2688 & 0.2071 & 0.4088 & 0.4533 & 3.123 & 5.395 \\
 & Shift(Ours)   & \textbf{0.3809} & \textbf{0.6244} & \textbf{0.4731} & \textbf{0.3733} & \textbf{0.3894} & 3.041 & 0.977 \\
\hline
\multirow{3}{*}{\textbf{$\approx$5}}
 & Fixed   & 0.2010 & 0.2003 & 0.2007 & 0.4854 & 0.4884 & 4.998 & 0.002 \\
 & Semantic& 0.1555 & 0.1553 & 0.1554 & 0.3914 & 0.4074 & 4.990 & 16.371 \\
 & Shift(Ours)   & \textbf{0.3452} & \textbf{0.3406} & \textbf{0.3429} & \textbf{0.3763} & \textbf{0.3827} & 5.049 & 1.489 \\
\hline
\multirow{3}{*}{\textbf{$\approx$7}}
 & Fixed   & 0.2254 & 0.1603 & 0.1873 & 0.4352 & 0.4382 & 7.000 & 0.000 \\
 & Semantic& 0.1384 & 0.0968 & 0.1139 & \textbf{0.3944} & \textbf{0.4007} & 7.117 & 41.046 \\
 & Shift(Ours)   & \textbf{0.3014} & \textbf{0.2104} & \textbf{0.2478} & 0.4051 & 0.4104 & 7.134 & 1.600 \\
\hline
\end{tabular}
\label{tab:arxiv_splitter_results}
\end{table*}

\begin{table*}[t]
\centering
\small
\setlength{\tabcolsep}{5pt}
\caption{MINE paragraph-based segmentation: comparison of Fixed Splitter, Semantic Splitter, and Semantic Shift Splitters under matched granularity ($\approx$3/5/7 sentences per chunk). Higher is better for P/R/F1; lower is better for Pk and WindowDiff. Chunk statistics (avg\_sents/chunk, var\_sents/chunk) are reported in the last two columns.}
\begin{tabular}{l c ccccc cc}
\hline
\textbf{Granularity} & \textbf{Splitter} & \textbf{P} & \textbf{R} & \textbf{F1} & \textbf{Pk}$\downarrow$ & \textbf{WD}$\downarrow$ & \textbf{avg\_sents} & \textbf{var\_sents} \\
\hline
\multirow{3}{*}{\textbf{$\approx$3}}
 & Fixed   & 0.2845 & 0.3077 & 0.2956 & 0.4772 & 0.4779 & 3.000 & 0.000 \\
 & Semantic& 0.3304 & 0.3543 & 0.3420 & 0.4679 & 0.5501 & 3.026 & 7.287 \\
 & Shift(Ours)   & \textbf{0.4203} & \textbf{0.4487} & \textbf{0.4340} & \textbf{0.3907} & \textbf{0.4014} & 3.039 & 1.078 \\
\hline
\multirow{3}{*}{\textbf{$\approx$5}}
 & Fixed   & 0.2962 & 0.1923 & 0.2332 & 0.5246 & 0.5246 & 4.995 & 0.016 \\
 & Semantic& 0.3070 & 0.1993 & 0.2417 & 0.5160 & 0.5494 & 4.995 & 22.941 \\
 & Shift(Ours)   & \textbf{0.3485} & \textbf{0.2238} & \textbf{0.2725} & \textbf{0.5124} & \textbf{0.5135} & 5.049 & 2.304 \\
\hline
\multirow{3}{*}{\textbf{$\approx$7}}
 & Fixed   & 0.3417 & 0.1585 & 0.2166 & 0.5343 & 0.5346 & 6.985 & 0.090 \\
 & Semantic& 0.3026 & 0.1375 & 0.1891 & 0.5408 & 0.5573 & 7.128 & 43.186 \\
 & Shift(Ours)   & \textbf{0.3753} & \textbf{0.1737} & \textbf{0.2375} & \textbf{0.5318} & \textbf{0.5329} & 7.003 & 3.269 \\
\hline
\end{tabular}
\label{tab:mine_splitter_results}
\end{table*}

\paragraph{Datasets.}
We evaluate on two paragraph-annotated sources that reflect different discourse structures and segmentation cues.

\textbf{(1) ArXiv Abstracts.}
We use scientific abstracts from ArXiv~\citep{arxiv_abstracts_hf}, where ground-truth boundaries are defined by natural paragraph breaks.
This setting emphasizes fine-grained rhetorical and topical transitions in compact, information-dense text (Table~\ref{tab:arxiv_splitter_results}).

\textbf{(2) MINE (KG-Gen Evaluation Essays).}
We use the essay dataset MINE\citep{mo2025kggen}.
Each instance is a short essay consisting of multiple paragraphs; we treat the paragraph boundaries as the ground-truth segmentation.
Compared with ArXiv, MINE contains more narrative/expository transitions, providing a complementary testbed for semantic chunking beyond scientific writing.

For both datasets, we segment at the sentence level and define a gold boundary whenever a paragraph break occurs between two consecutive sentences.

\paragraph{Baselines.} We compare our proposed Semantic Shift Splitter against two widely-used document tiling strategies:

(1) Fixed-Length Splitting: A standard heuristic-based baseline that partitions text into chunks of a fixed number of sentences or tokens. This method ignores the underlying semantic structure but serves as a fundamental benchmark for retrieval efficiency.

(2) Standard Semantic Splitter: A dynamic splitting strategy popularized by frameworks like LlamaIndex \cite{LlamaIndex} and LangChain \cite{LangChain}. This approach determines boundaries by calculating the cosine dissimilarity between adjacent sentence embeddings and setting breakpoints at a specific percentile of local dissimilarity scores.

(3) Semantic Shift Splitter (Ours): Our proposed method, which leverages semantic shift to achieve more contextually coherent partitions.

\paragraph{Metrics.}
We report boundary Precision/Recall/F1, $P_k$, and WindowDiff (WD), and additionally track chunk-size statistics \textbf{avg\_sents/chunk} and \textbf{var\_sents/chunk}.
The variance term is practically important for RAG, since large variance introduces unstable evidence granularity and can bias retrieval and reranking.

\paragraph{Matching chunk granularity.}
To avoid confounding segmentation quality with chunk size, we compare methods under \textbf{approximately matched avg\_sents/chunk}.
Fixed controls granularity via $k$ (sentences per chunk), while Semantic/Shift mainly use semantic\_percentile and shift\_percentile, respectively (smaller percentile $\Rightarrow$ more cuts).
In practice, we (i) set Fixed to a target $k$, (ii) sweep a small set of percentile values for Semantic and Shift, and (iii) select configurations whose avg\_sents/chunk best matches the target.
This protocol produces a fair head-to-head comparison where improvements reflect better boundary placement and segmentation consistency rather than simply generating finer chunks.

\subsection{Results and Observations}
\label{app:ss_splitter:results}

Tables~\ref{tab:arxiv_splitter_results} and~\ref{tab:mine_splitter_results} summarize results on ArXiv and MINE under three matched granularities ($\approx$3/5/7 sentences per chunk).

\paragraph{Boundary quality: Semantic Shift Splitter yields consistently higher F1 at matched granularity.}
Across both datasets, the Semantic Shift Splitter achieves the strongest boundary F1 in \emph{all} three regimes.
On ArXiv (Table~\ref{tab:arxiv_splitter_results}), Shift improves F1 substantially over Fixed and the standard Semantic Splitter at $\approx$3/5/7 (0.4731/0.3429/0.2478 vs.\ 0.2495--0.2007--0.1873 for Fixed and 0.2071--0.1554--0.1139 for Semantic).
The same pattern holds on MINE (Table~\ref{tab:mine_splitter_results}), where Shift achieves the best F1 at $\approx$3/5/7 (0.4340/0.2725/0.2375), outperforming both baselines in each regime.
Notably, these gains are typically driven by improved recall while maintaining strong precision, consistent with the intuition that shift detects when a segment becomes semantically unstable and should be cut.

\paragraph{Window metrics: Semantic Shift Splitter improves Pk/WD on ArXiv and remains competitive on MINE.}
On ArXiv, Semantic Shift Splitter achieves the best (lowest) $P_k$ and WD across the first two granularities (Table~\ref{tab:arxiv_splitter_results}), indicating not only better point-wise boundary alignment (higher F1) but also stronger global consistency under window-based evaluation.
On MINE, Shift attains the lowest $P_k$ and WD at $\approx$3, and remains close to Fixed at $\approx$5 and $\approx$7 (Table~\ref{tab:mine_splitter_results}).
Overall, Shift provides a clearer and more reliable trade-off: it improves boundary F1 consistently, while maintaining competitive window metrics across two distinct domains.

\paragraph{Chunk-size stability: Semantic Shift Splitter sharply reduces variance relative to Semantic splitting.}
A persistent observation across both datasets is that the standard Semantic Splitter produces highly uneven chunk sizes even when avg\_sents/chunk is matched.
Its variance grows rapidly with granularity (ArXiv: 5.395/16.371/41.046; MINE: 7.287/22.941/43.186), indicating a mixture of very short and very long chunks (Tables~\ref{tab:arxiv_splitter_results} and~\ref{tab:mine_splitter_results}).
By contrast, Semantic Shift Splitter maintains much lower variance (ArXiv: 0.977/1.489/1.600; MINE: 1.078/2.304/3.269), approaching the regularity of Fixed splitting while remaining content-adaptive.
This stability is practically important for RAG, as it reduces both fragmented evidence (overly short chunks) and token-inefficient passages (overly long chunks), making retrieval and reranking behavior more predictable.

Across ArXiv and MINE datasets, and under matched granularity, the Semantic Shift Splitter consistently improves boundary F1 and generally yields better or competitive $P_k$/WD, while dramatically reducing chunk-size variance compared to the standard Semantic Splitter (Tables~\ref{tab:arxiv_splitter_results} and~\ref{tab:mine_splitter_results}).
These results support the claim that explicitly combining local semantic transitions with global dispersion leads to a more accurate and controllable segmentation strategy.

\subsection{Summary and Commentary}
\label{app:ss_splitter:summary}
This appendix turns the semantic shift from an analytic quantity into a practical segmentation mechanism.
The Semantic Shift Splitter cuts when a segment’s joint local shift and global dispersion indicate semantic instability, using a document-adaptive threshold and an online construction compatible with RAG constraints.
Empirically, under matched granularity, it consistently improves boundary F1 over Fixed and Semantic splitting, while dramatically reducing chunk-size variance compared to the Semantic Splitter.
These results support the broader takeaway of the main paper: semantic shift is not only a fundamental challenge for embeddings and retrieval but also a useful, controllable signal for building more reliable text processing components.



\end{document}